\def\eqref#1{equation~\ref{#1}}
\def\1{\bm{1}}
\def\vx{{\bm{x}}}
\def\vy{{\bm{y}}}
\DeclareMathAlphabet{\mathsfit}{\encodingdefault}{\sfdefault}{m}{sl}
\SetMathAlphabet{\mathsfit}{bold}{\encodingdefault}{\sfdefault}{bx}{n}
\DeclareMathOperator*{\argmax}{arg\,max}
\DeclareMathOperator*{\argmin}{arg\,min}
\definecolor{sailiks-yellow}{RGB}{240,210,90}
\definecolor{sailiks-red}{RGB}{212,88,107}
\definecolor{sailiks-green}{RGB}{170,223,112}
\definecolor{sailiks-blue}{RGB}{106,190,237}
\definecolor{sailiks-orange}{RGB}{249,133,82}
\definecolor{sailiks-gray}{RGB}{106,113,101}
\definecolor{MidnightBlue}{HTML}{006795}
\definecolor{BrickRed}{HTML}{B6321C}
\definecolor{Bittersweet}{HTML}{C04F17}
\definecolor{Green}{HTML}{00A64F}
\definecolor{Emerald}{HTML}{00A99D}
\definecolor{Apricot}{HTML}{FBB982}
\definecolor{Orchid}{HTML}{AF72B0}
\newcommand{\afterfigspace}{\vspace{0pt}}
\theoremstyle{plain}
\newtheorem{theorem}{Theorem}
\newtheorem{definition}{Definition}
\newtheorem{remark}{Remark}
\newtheorem{lemma}{Lemma}[section]
\newcommand{\alg}{\textbf{\texttt{SeRA}}\,}
\newcommand{\sailikstitle}{\textbf{SeRA}: Self-Reviewing and Alignment of LLMs using Implicit Reward Margins}
\title{\sailikstitle}
\definecolor{myorange}{RGB}{255, 165, 0}
\colorlet{myorange}{myorange!90}
\definecolor{myskyblue}{RGB}{135, 206, 235}
\colorlet{myskyblue}{myskyblue!90}
\author{
\qquad Jongwoo Ko${}^{1}$\thanks{
Work done as a research intern at Amazon.
}
\qquad Saket Dingliwal${}^{2}$
\qquad Bhavana Ganesh${}^{2}$
\qquad Sailik Sengupta${}^{3}$
\\[0.2em]
\qquad \qquad \qquad \qquad \qquad \textbf{Sravan Bodapati${}^{2}$} 
\qquad \textbf{Aram Galstyan${}^{2}$}
\\[0.5em]
\qquad \qquad \quad \qquad ${}^{1}$KAIST AI
\qquad \qquad ${}^{2}${\color{sailiks-orange} \faAmazon}\,mazon AGI
\qquad \qquad ${}^{3}${\color{sailiks-orange} \faAmazon}WS AI Labs
}
\begin{document}

\maketitle

\begin{abstract}
Direct alignment algorithms (DAAs), such as direct preference optimization (DPO), have become popular alternatives for Reinforcement Learning from Human Feedback (RLHF) due to their simplicity, efficiency, and stability. However, the preferences used in DAAs are usually collected before the alignment training begins and remain unchanged (off-policy). This design leads to two problems where the policy model (1) picks up on spurious correlations in the dataset (as opposed to learning the intended alignment expressed in the human preference labels), and (2) overfits to feedback on off-policy trajectories that have less likelihood of being generated by the updated policy model.
To address these issues, we introduce Self-Reviewing and Alignment (\textbf{\texttt{SeRA}}), a cost-efficient and effective method that can be readily combined with existing DAAs. \alg~comprises of two components: (1) {\em sample selection} using implicit reward margins, which helps alleviate over-fitting to some undesired features, and (2) {\em preference bootstrapping} using implicit rewards to augment preference data with updated policy models in a cost-efficient manner. Extensive experimentation, including some on instruction-following tasks, demonstrate the effectiveness and generality of \alg~in training LLMs on offline preference datasets with DAAs.
\end{abstract}

\section{Introduction}

Large Language models (LLMs;~\citealt{achiam2023gpt, team2023gemini, jiang2024mixtral}) have shown mastery on a multitude of tasks in artificial intelligence (AI), ranging from creative writing~\citep{wang2024weaver} to code generation~\citep{li2023starcoder}, and mathematical reasoning~\citep{ahn2024large}. With success, comes concerns related to their safety, reliability, and potential for misuse in sensitive domains like social manipulation, cyber-attacks, etc. To address some of these challenges, works have considered aligning LLMs to human values/preferences using approaches like Reinforcement Learning from Human Feedback (RLHF;~\citealt{ouyang2022training, bai2022training}). 

Initial RLHF approaches, such as Proximal Policy Optimization (PPO), necessitated the need for two separate models— a policy model (the LLM model to be aligned) and a reward model \citep{schulman2017proximal,ouyang2022training}.  Such approaches often result in challenges related to stability and scalability~\citep{rafailov2024direct,zhao2023slic,meta2024llama3}. To mitigate these shortcomings, Direct Alignment Algorithms\,(DAAs), such as direct preference optimization~(DPO; \citealt{rafailov2024direct}), sequence likelihood calibration with human feedback~(SLiC-HF; \citealt{zhao2023slic}), and identity policy optimization~(IPO; \citealt{azar2024general}), have emerged as popular alternatives. In DAA, we directly update the policy model/LLM using (some closed-from solution of) the pairwise preference data without the need for an explicit reward model, making the alignment process simpler, more efficient and stable compared to earlier methods~\citep{rafailov2024direct}.

However, DAAs leverage preference rating on off-policy trajectories collected prior to alignment tuning (at times, generated by a different LLM/policy-model \citep{zhao2023slic, tunstall2023zephyr}), resulting in two challenges.
\textit{First}, the off-policy preference data, which still require labor-intensive annotations, often contain noisy features orthogonal to the true preference objective (e.g. longer responses are more preferred; \citealt{park2024disentangling}). Thus, training on this data can teach the policy model to over-fit to such noise~\citep{mitchell2023note, chowdhury2024provably}, thereby learn spurious correlations~\citep{park2024disentangling, rafailov2024direct}.
While some works try to consider explicit regularization to alleviate over-fitting to spurious features~\citep{park2024disentangling}, identifying all possible spurious features is often challenging and incomplete~\citep{rafailov2024scaling}. 
\textit{Second}, the preference feedback, being off-policy, cannot aid the policy model to obtain feedback on its own generations during alignment training. As the off-policy and the on-policy may not belong to the same distribution, this indirect feedback can inhibit improvement of the policy model~\citep{tajwar2024preference}.
The latter shortcoming is partially addressed by PPO-like methods, where the stand-in reward model can rate on-policy generations~\citep{guo2024direct}, although its robustness on on-policy trajectories remain questionable.

To inherit the best of both worlds, several works have explored using RL from AI-Feedback (RLAIF; \citealt{guo2024direct,rosset2024direct,lee2023rlaif,bai2022constitutional}). These approaches aim to mimic the human's preference rating behavior using high-quality LLMs (via API access). Unfortunately, the approximate preference-distillation-approaches are both inefficient and expensive. For example, Direct Nash optimization~(DNO;\,\citealt{rosset2024direct}) incurs a cost of \$34,000 to obtain preference annotations using GPT-4~\citep{zheng2024judging}. Thus, the motivations of bypassing training of a reward model lands up eventually incurring exorbitant costs.

\paragraph{Contributions.} In this paper, we propose to augment DAAs with a novel strategy, \textbf{Se}lf-\textbf{R}eviewing and \textbf{A}lignment (\alg) that uses Implicit Reward Margin (IRM; defined as \autoref{eqn:rm}) for off-policy sample selection and iteratively bootstraps preference data for alignment (see \autoref{fig:sera}). Specifically, we show that:
\begin{itemize}[leftmargin=*, itemsep=0pt]
    \item \textbf{IRM-based Off-policy Sample Selection} helps mitigate over-optimization of policy models to spurious correlations (such as considering response length to gauge preference). Its efficacy and cost-effectiveness is consistent across various DAAs, datasets, problems, and model variants.
    \item \textbf{IRM-based Preference Data Bootstrapping} mitigates DAAs from continuously updating policy models with off-policy data and proposes a decoding and rejection sampling approach to extract informative policy pairs (based on IRM) for continual alignment training. This improves the efficacy and efficiency of DAAs without the need for expensive (\& external) reward models.
    \item \textbf{Better Performance and Versatility: } We empirically showed that \alg can be widely used across various DAAs (\textit{e.g.,} DPO, IPO, SLiC-HF, SimPO) and on various LLMs~(\textit{e.g.}, TinyLlama-1.1B, Pythia-2.8B, Mistral-7B) consistently outperforming SoTA baselines~\citep{kim2024sdpo, pattnaik2024curry}. Finally, we conduct an exploratory analysis and ablations to better understand \alg.
\end{itemize}

\begin{figure}
    \centering
    \includegraphics[width=\linewidth]{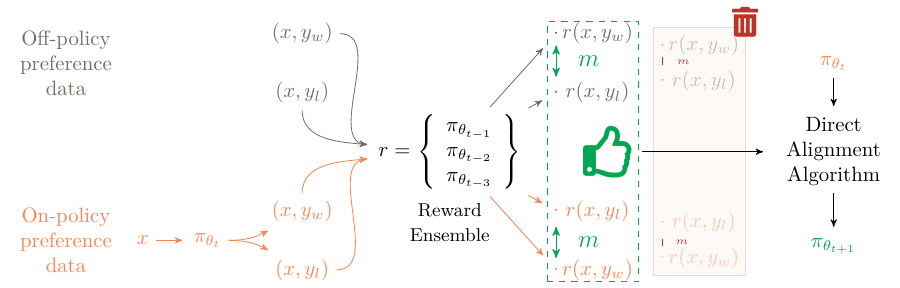}
    \caption{\alg\ uses an iterative policy-model ensemble to obtain reward margins between preferred and dis-preferred trajectories in {\color{sailiks-gray} off-policy} and {\color{sailiks-orange} on-policy} preference data. It then selects preference data with {\color{Green} large} reward margins and rejects the data with {\color{BrickRed!90} smaller} rewards margins to update the policy models via Direct Alignment Algorithms (DAAs).}
    \label{fig:sera}
\end{figure}

\section{Background}
In this section, we provide a brief overview of related works~(\S\ref{sec:related}) and discuss some preliminaries~(\S\ref{sec:preliminary}) necessary to understand our contribution.

\subsection{Related Work}\label{sec:related}

\paragraph{Alignment with preference data} Actor-critic RLHF frameworks~\citep{christiano2017deep, stiennon2020learning, bai2022training, ouyang2022training} seeks to align language models to human preferences, but is both memory-intensive (requires a policy model and a reward model to be on device simultaneously) and unstable during training. 
To mitigate this, several algorithms, such as direct preference optimization (DPO; \citealt{rafailov2024direct}) and sequence likelihood calibration (SLiC-HF; \citealt{zhao2023slic}), learn the contrastive preference in the offline setting using a closed-form loss function without the need for an explicit critic/reward model.
\cite{azar2024general} argued that without regularization, a policy can easily overfit to deterministic preferences and introduced identity preference optimization (IPO) to directly optimize offline preference probabilities with regularization.
In parallel, the availability of offline preference datasets like UltraFeedback~\citep{cui2023ultrafeedback}, OpenOrca~\citep{OpenOrca}, Helpfulness and Harmless~\citep{ganguli2022red}, and TL;DR~\cite{stiennon2020learning} has made aligning LLMs more accessible.

\paragraph{Alignment with on-policy preference annotations.} Despite its effectiveness, aforementioned approaches use reward signals over off-policy trajectories (or critic models trained to reward off-policy trajectories) that may have a distribution mismatch from on-policy trajectories generated by the policy model, which is improved in an iterative manner \citep{guo2024direct, ko2024distillm}. A major impediment is the lack of human annotators (gold critic) to provide preference data for trajectories generated by iteratively-improving policy models. To address this, \citet{guo2024direct} proposed online AI-feedback~(OAIF) by using another LLM to annotate which of two online-sampled outputs from the current policy is preferred. This idea of distilling preference labels comes in various flavors. For example, Direct Nash optimization~(DNO) distills preference signals for on-policy trajectories generated by the updated policy model using a strong teacher model \citep{rosset2024direct}. While effective at distilling preference, this methods can incur high expenses for calling proprietary LLM APIs or violate (legal) terms-of-use, making them difficult to use practically.

Along these lines, cost-efficient alternatives encourage using the policy models itself as preference labelers or leveraging implicit preference superiority with high-quality human responses. Self-rewarding LMs~\citep{yuan2024self} studied the benefits of iteratively training on preferences derived from recent policy's sampled outputs. However, in their work, they used the LLM itself as the annotator based on prompting~\citep{zheng2024judging}, which is only valid for LLMs capable of following the prompt properly.
Self-play fine-tuning~(SPIN; \citealt{chen2024self}) and adversarial preference optimization~(APO; \citealt{cheng2023adversarial}) are both iterative LLM training techniques that are compatible with contrastive losses. However, as highlighted in~\citep{rosset2024direct}, expecting high-quality predefined responses being better than the policy generation without considering of annotator feedback is a severe limitation. In general, methods that use a proxy reward model (whether a more capable LLM teach or the policy model itself) can suffer from reward-hacking that can significantly degrade the alignment quality in iterative improvement of policy models \citep{pan2024spontaneous}.

\paragraph{Sample Selection Approaches.}  In this work, \alg~partially relies on selecting off-policy samples based on an Internal Reward Margin (IRM) of policy models, as opposed to using the absolute reward. We highlight that this minimize distribution shift and, in turn, reduces performance deterioration when using offline preference datasets. While previous works have considered different mechanisms to sub-sample existing data, at times reducing the distributional shift across iterations, their selection approaches and goals differ from ours. For example, \citet{cazenavette2022dataset} and \citet{chai2023goodcore} focused on improving training efficiency by using a sub-set of the data while guaranteeing performance similar to a model trained on the full data. Along similar lines, \citet{kim2021fine, xia2022sample} and \citet{mirzasoleiman2020coresets} formulate the problem of finding a a core subset of the entire training data that can capture the statistical properties of the original dataset, also aiming to enhance training efficiency. Recent work has also investigated using a weighted loss function over off-policy samples based on the recent policy model's judgements of how close they are to on-policy trajectories \cite{zhou2024wpo}. Instead of focusing on improving training efficiency or finding ways to adapt off-policy data to an (pseudo) on-policy settings, our motivation is to eliminate preference pairs that can cause performance degradation due to distribution shift.  Inspired by work on sample selection for robust training in the presence of noisy labels~\citep{kim2021fine, ko2023gift, ahn2023fine}, our approach aims to select informative samples that clearly align with human preferences based on IRM.

\paragraph{Training with Self-Generated Response.} Another component of \alg~ is exploiting self-generated responses during the training procedure. In this regard, \citet{agarwal2024policy} proposed an on-policy distillation approach that uses student-generated outputs as input for both student and teacher models to reduce distribution shift between training and inference. In contrast, our method relates more to bootstrap techniques~\citep{tibshirani1993introduction} that have policy models produce preference pairs from given queries. \citet{grill2020bootstrap} introduced a self-supervised learning framework with two types of networks—online and target—where the online network predicts the same as the target network on different augmented views. \citet{li2022blip} introduced BLIP to pre-train vision-language models by effectively utilizing noisy web data and bootstrapped captions.

Concurrent to our work, \citet{chen2024bootstrapping, kim2024aligning} suggested constructing preference samples by training policy LLMs without using external reward models. However, while these works did not explore other DPO variants, such as SLiC-HF and IPO, our work demonstrated their effectiveness on various tasks. Additionally, we provide the statistical grounding of our choice of IRM and empirically enhance the method by introducing an ensemble of different iterations to improve the effectiveness of our IRM-based technique.

\subsection{Preliminaries}\label{sec:preliminary}
\paragraph{Reinforcement Learning From Human Feedback.}
The goal of RLHF is to optimize the policy LLM $\pi_{\theta}$ such that it maximizes the expected value of the reward function. A common approach to modeling the reward function is using Bradley-Terry~(BT; \citealt{bradley1952rank}) model:
\begin{equation}
    p(\vy_{w} \succ \vy_{l}|\vx) = \frac{\exp \left( r(\vx, \vy_w) \right)}{\exp \left( r(\vx, \vy_w) \right) + \exp \left( r(\vx, \vy_l) \right)} = \sigma( r(\vx, \vy_w) - r(\vx, \vy_l)),
\end{equation}
where $\vy_w$ (and $\vy_l$) denotes the preferred/winning (and losing) policy, $p$ denotes the preference distribution that approximates an unobserved latent reward $r(\vx, \vy)$, and $\sigma$ is the logistic function. To achieve this, RLHF first trains a reward model $r_\phi(\vx, \vy)$. Then, RLHF updates $\pi_\theta$ with an on-policy RL algorithm like PPO~\citep{schulman2017proximal}, iteratively optimizing the model to provide responses more preferred by human. The most common objective is
\begin{equation}\label{eq:rlhf}
    \mathcal{L}_{\text{RLHF}} = \mathbb{E}_{\vx \sim \mathcal{D}, \vy \sim \pi_{\theta}(\cdot|\vx)} \left[ r_\phi (\vx, \vy) \right] - \beta \mathbb{D}_{\text{KL}} \left[ \pi_\theta (\vy|\vx) \| \pi_{\text{ref}} (\vy|\vx) \right],
\end{equation}
which enforces a KL divergence~\citep{kullback1951information} penalty with a reference distribution $\pi_{\text{ref}}(\vy|\vx)$, to prevent the LLM $\pi_\theta$ from straying too far from its initialization. The hyper-parameter $\beta$ balances between exploiting the reward function and deviating from $\pi_{\text{ref}}(\vy|\vx)$.

\paragraph{Direct Alignment Algorithms.} RLHF is computationally expensive and unstable~\citep{rafailov2024direct,meta2024llama3}. Thus, many algorithms~\citep{rafailov2024direct, zhao2023slic, azar2024general} have been proposed to overcome these challenges.
A common idea is to analytically derive the optimal policy and parameterize it using the reward function from \autoref{eq:rlhf}.
In DPO~\citep{rafailov2024direct}, the optimal policy $\pi^{*}$ under the BT model satisfies:
\begin{equation}\label{eqn:prob}
    p^{*}(\vy_{w} \succ \vy_{l}|\vx) = \left( 1 + \exp \left( \beta \log \frac{\pi^{*}(\vy_{l}|\vx)}{\pi_{\text{ref}}(\vy_{l}|\vx)} - \beta \log \frac{\pi^{*}(\vy_{w}|\vx)}{\pi_{\text{ref}}(\vy_{w}|\vx)} \right) \right)^{-1},
\end{equation}
where $p^{*}(\vy_{w} \succ \vy_{l}|\vx)$ is underlying true preference, the probability that $\vy_w$ is more preferred than $\vy_l$. With human preference data expressed in terms of the optimal policy rather than the reward model, we can formulate a maximum likelihood objective for a parameterized policy $\pi_{\theta}$. 
\begin{equation}\label{eqn:dpo}
    \mathcal{L}_{\text{DPO}}(\vx, \vy_{w}, \vy_{l}) = - \log \sigma \left( \beta \log \frac{\pi_{\theta}(\vy_{w}|\vx)}{\pi_{\text{ref}}(\vy_{w}|\vx)} - \beta \log \frac{\pi_{\theta}(\vy_{l}|\vx)}{\pi_{\text{ref}}(\vy_{l}|\vx)} \right)
\end{equation}
Current methods for LLM alignment first collect a dataset of pairwise preferences by obtaining two responses to an input prompt $\vx$ generated using an LLM. Then, human or AI annotators rank these responses, yielding a preferred response $\vy_{w}$ and a less preferred one $\vy_{l}$.

\section{Our Method: Self-Reviewing and Alignment}

In contrast to recent works on DAAs~\citep{rosset2024direct, guo2024direct}, we propose \alg~to improve alignment with DAA without the need for any external supervision. While self-verification highlights that a policy model can dual up as a reward model \citep{weng2022large}, we show that this self-supervision can be done implicitly for alignment using the notion of Implicit Reward Margin (IRM). After defining IRM in this section, we describe (1) an IRM-based off-policy sample selection, and (2) an IRM-based iterative preference data bootstrapping. Finally, we describe our approach \alg~and how it can be seamlessly incorporated into DAAs.

\subsection{Implicit Reward Margin (IRM)}
While works have shown that alignment datasets may have ambiguous preferences~\citep{yang2023rlcd, chowdhury2024provably}, even the use of unambiguous preference data can result in policy models unintentionally learning spurious correlations (e.g. on response length)~\citep{park2024disentangling, rafailov2024scaling}. Recent works consider selecting preference samples $(\vx, \vy_{w}, \vy_{l})$ where there exists a large difference in preferences (i.e. $\vy_w \succ\succ \vy_l$) to help mitigate the noise in the learning~\citep{yang2023rlcd, rosset2024direct}.
We leverage this observation and introduce a Implicit Reward Margin (IRM) to quantify the difference between the preference over two responses using the policy model $\pi_\theta$ itself. Formally, we define IRM as follows where $\pi_{\text{ref}}$ is the reference model: 
\begin{equation}\label{eqn:rm}
    m(\vx, \vy_{w}, \vy_{l}) \coloneqq \frac{1}{\beta}\left(r(\vx, \vy_{w}) - r(\vx, \vy_{l})\right) = \log \frac{\pi_{\theta}(\vy_{w}|\vx)}{\pi_{\text{ref}}(\vy_{w}|\vx)} - \log \frac{\pi_{\theta}(\vy_{l}|\vx)}{\pi_{\text{ref}}(\vy_{l}|\vx)},
\end{equation}
which follows from \autoref{eqn:prob} and \autoref{eqn:dpo}. Unlike \citet{rafailov2024direct}, we omit the normalization term for the reward $r(\vx, \vy) \coloneqq \log \left(\frac{\pi_\theta(\vy|\vx)}{\pi_{\text{ref}}(\vy|\vx)}\right)$ for tractability.

\subsection{IRM-based Off-policy Sample Selection}
\label{sec:reward_selection}
\begin{figure}
    \centering
    \includegraphics[width=\textwidth]{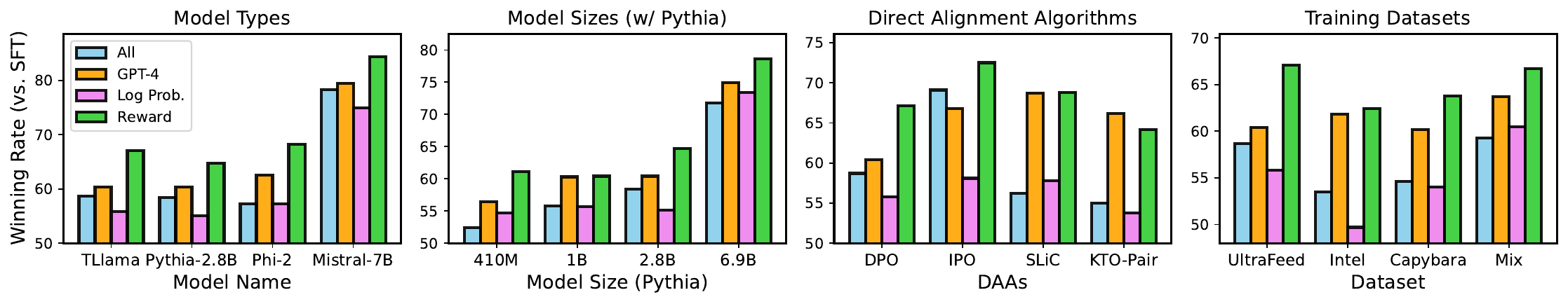}
    \caption{Motivation of \alg: The reward-margin based selection showed higher win-rates (measured by LLM-as-a-Judge~\citep{zheng2024judging} with Claude 3~\citep{claude3}) against the base SFT model across a variety of settings-- (a) Model types: TinyLlama-1.1B~\citep{zhang2024tinyllama}, Pythia-2.8B~\citep{biderman2023pythia}, Phi-2~\citep{textbooks2}, and Mistral-7B~\citep{jiang2023mistral}. (b) Model size: various model sizes of Pythia~(410M, 1.0B, 2.8B, 6.9B). (c) DAAs: DPO~\citep{rafailov2024direct}, IPO~\citep{azar2024general}, SLiC~\citep{zhao2023slic}, and KTO-Pair~\citep{ethayarajh2024kto} and (d) training datasets. 
    }
    \vspace{-10pt}
\label{fig:reward_motif}
\afterfigspace
\end{figure}

We note that a policy model  $\pi_\theta$ can generate mis-calibrated rewards for individual trajectories \citep{panickssery2024llm}, but we empirically study if the reward margin between preferred ($\vy_w$) \textit{vs.} less-preferred trajectories ($\vy_l$) can provide relevant signals to choose samples for improving alignment. Finally, we propose to filter out training samples with {\em lower} IRMs. To motivate our choice, we compare against other sample selection methods in the \autoref{fig:reward_motif}. 
 IRM-based sample selection (in {\color{green!70!black} green}) consistently leads to higher win rates against the SFT model, when compared to alignment on the entire preference samples (in {\color{myskyblue!90!black} sky-blue}), or strong  baseliense that using RLAIF with GPT-4 score~(\cite{zheng2024judging}; in {\color{orange} orange}) or the log probability of $\pi_{\text{ref}}$~(\cite{pattnaik2024curry}; in {\color{pink!90!black} pink}) for sample selection. These gains hold true across model types, model size, DAAs, and datasets. Moreover, methods that use GPT-4 to obtain reward signals for sampling \citep{rosset2024direct,zheng2024judging} incur high-cost; in comparison, our sample selection strategy that leverages $\pi_\theta$ to compute IRM is inexpensive and more effective.

\paragraph{Mitigating over-optimization to spurious correlation.}
To understand why IRM selection benefits alignment, we first highlight that recent works have shown that DAAs prioritize features of the data based on their complexity and prevalence, often resulting in learning preference signals based on dimensions orthogonal to the alignment objective, such as the length of responses~\citep{park2024disentangling, rafailov2024scaling}. Thus, we empirically analyze if IRM's sample selection strategy can weed out such noisy examples, thereby preventing over-optimization. 
To this extent, we study various correlations on TinyLlama on the UltraFeedback dataset for the four aforementioned sample selection strategies.

In ~\autoref{fig:over_optimize}, we can observe that the rewards  of the model trained on IRM-based sample selection (fourth column) has the highest $R^2$ scores with GPT-4 score, while also having the lowest $R^2$ score with the response lengths. 
This indicates that the policy model is highly aligned with the preferences of GPT-4 but does not learn the spurious feature of response length. Even though the rewards of the model without any sample selection (first column) also have the highest correlation with GPT-4 score, their correlation with response length is also high, a result consistent with previous works~\citep{park2024disentangling, rafailov2024scaling} highlighting that DAAs indeed risks over-optimization where the model can learn spurious correlations.
These results showcase that the proposed IRM selection strategy can mitigate over-optimization problems to response length, while other selection methods such as GPT-4 (second column; \citealt{rosset2024direct})\footnote{This result is consistent with recent works indicating that even GPT-4 evaluation has a length bias\,\citep{wu2023style, koo2023benchmarking, dubois2024length}. This problem might also affect off-the-shelf reward models based on existing work on reward hacking of reward models~\citep{gao2023scaling}, as shown in \autoref{tab:tldr}.} and log probability of the reference model (third column; \citealt{pattnaik2024curry}) don't mitigate or, at times, exacerbate the undesired bias.

\begin{figure}
    \centering
    \includegraphics[width=\textwidth]{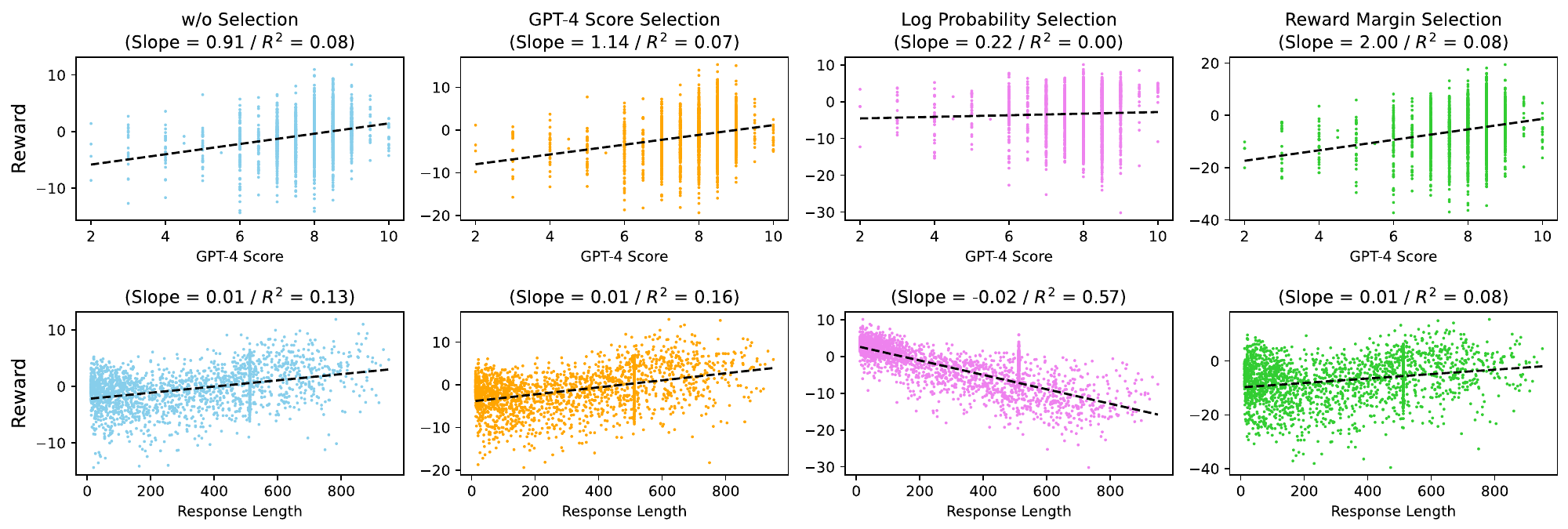}
    \caption{
        Comparison of correlations between the rewards of the trained policy model and features (\textit{e.g.,} GPT-4 score and response length) across different selection methods (\textit{i.e.,} no selection, GPT-4 score-based, log probability of the reference model, and IRM-based selection). 
        \textbf{[Row 1]} Correlation between GPT-4 Score \& implicit reward~(\textit{i.e,} $r(\mathbf{x}, \mathbf{y}_{w})$) for $\mathbf{y}_w$.
        \textbf{[Row 2]} Correlation between response length (\textit{i.e.} $|\mathbf{y}_w|$) and the implicit reward for chosen responses.
        The model with IRM selection~(\textit{i.e.} \textbf{[Column 4]}) shows the highest $R^2$ score for the first row, but the lowest $R^2$ score for the second row. This indicates that IRM-based selection strategy can effectively mitigate the over-optimization on response length~\citep{park2024disentangling}. 
    }
\label{fig:over_optimize}
\afterfigspace \afterfigspace \afterfigspace \afterfigspace
\end{figure}

\paragraph{Theoretical Intuition of Sample Selection.} Research has shown that DNNs first learn explicit features and then memorize minor ones~(\textit{e.g.,} spurious features, noisy labels) during the later training stage~\citep{zhang2017understanding, liu2020early, nam2020learning, liu2021just}. 
Inspired by this, we intend the policy LLM 
to first learn the human preference using confident preference pairs where true preferences $p^*(\vy_{w} \succ \vy_{l}|\vx) \simeq 1$. 
As the estimated probability $\hat{p}_{\theta}(\vy_{w} \succ \vy_{l}|\vx) = \sigma \left( \beta \log \frac{\pi_{\theta}(\vy_{w}|\vx)}{\pi_{\text{ref}}(\vy_{w}|\vx)} - \beta \log \frac{\pi_{\theta}(\vy_{l}|\vx)}{\pi_{\text{ref}}(\vy_{l}|\vx)} \right)$ is modeled by IRM in Eqn.\,(\ref{eqn:rm}) using the BT model, IRM selection removes samples with small margins that might lead to over-optimization on spurious features or noisy preferences.

The inherent problem of DPO is that is treats all preference samples equally (\textit{i.e.,} $p(\vy_w \succ \vy_l|\vx)=1, \forall (\vx, \vy_w, \vy_l) \in S$) even when the true preferences are not, i.e. $p^{*}(\vy_w \succ \vy_l|\vx) \neq p^{*}(\vy'_w \succ \vy'_l|\vx')$. By minimizing the empirical risk on such datasets, the difference between $\hat{p}$ and $p^{*}$ increases, which can results in over-fitting to spurious features.
However, by training LLMs on confident samples, we can achieve lower empirical risk and less overfitting~(\textit{i.e.}, small differences between $\hat{p}_{\theta}$ and $p^{*}$). Thus, we can successfully learn human preferences without learning spurious features, as shown in Fig.~\ref{fig:over_optimize}. While this analysis is consistent with the argument in \citet{azar2024general}, we formally calibrate the upper-bound for this risk as follows.
\begin{remark}[\textbf{Informal Statement for \autoref{thm:main}}]
    Under DPO training, with probability at least $1-\delta$, the upper bound of the risk of $f(\vx, \vy_w, \vy_l) \coloneqq r(\vx, \vy_w) - r(\vx, \vy_l)$, denoted as $R(f)$, over the underlying distribution $\mathbb{P}$, where the sample set $S = { (\vx, \vy_w, \vy_l) } \sim \mathbb{P}$, can be defined as:
    \begin{equation*}
        R(f) \leq {\color{Emerald!70!black} \hat{R}(f; S)} + \mathcal{O} \left( {\color{Apricot!70!black} \mathbb{E}_{\mathbb{P}} \| \hat{p}_{\theta}(\cdot) - p^{*}(\cdot) \|_{2}} \right) + \mathcal{O} \left( \sqrt{\tilde{\mathbb{V}}_{|S|}(f) \cdot \frac{\log (\mathcal{M}_{|S|}/\delta)}{|S|}} + \frac{\log ({\mathcal{M}_{|S|}}/{\delta})}{|S|} \right),
    \end{equation*}%
where $\mathcal{M}_{|S|}$ is uniform covering number. This reflects the risk of classifying preferable responses is upper bounded by {\color{Emerald!70!black} the empirical risk over observed samples}, the {\color{Apricot!70!black} difference between estimated and true preferences} and an additional term that is zero when $|S| \rightarrow \infty$. (See \autoref{app:proof} for the proof.)
\end{remark}

\subsection{IRM-based Preference Data Bootstrapping}
\label{sec:preference_bootstrap}
As mentioned earlier, DAA's use of off-policy preference annotation suffers from distributional mismatch between the sequences observed during training (which are from different LLMs ahead of training) and those generated by the iteratively updated policy LLM~\citep{arora2022exposure, ko2024distillm}, leading to reduced efficacy of DAAs~\citep{guo2024direct}. To address this, we present preference data bootstrapping that considers a decoding-time strategy to sample candidate pairs from the (updated) policy LLM, followed by a rejection sampling of pairs with low IRM.

To build a preference pair, we sample the $R$ ($\geq 2$) distinct candidate responses $\vy^{(i)}_{1}, \ldots, \vy^{(i)}_{R} \sim \pi_{\theta_{t-1}}(\cdot|\vx^{(i)})$ from a query $\vx^{(i)}$ by using decoding-time sampling. We then compute the implicit reward for each response $j (\in \{1,\dots,R\})$ using the term $r(\vx^{(i)}, \vy^{(i)}_{j})$ in \autoref{eqn:rm} and select the pair that maximizes the IRM.

Our method bears similarity to self-rewarding LMs (SRLM; \citealt{yuan2024self}) that reward responses by using  LLM-as-a-Judge\,\citep{zheng2024judging}. Unfortunately, this approach works reliably only when one uses LLMs with sufficient instruction-following capability (see \autoref{qualitative:srlm}), which leads to significant costs and limited applicability for smaller LLMs. In contrast, our approach is versatile across LLMs with a wide range of capacities and boasts strong empirical efficacy (see \autoref{tab:srlm}).

\subsection{\textbf{\texttt{SeRA}}: Self-Reviewing and Alignment}\label{sec:sera}

\begin{algorithm}[t]
    \caption{Self-Reviewing and Alignment} 
    \label{algo:sera}
    \begin{algorithmic}[1]

    \SetKwInput{KwInput}{Input}
    \SetKwInput{KwOutput}{Output}
    
    \STATE \textbf{Input}: Preference dataset $\mathcal{D}$, SFT model $\pi_{\theta_0}$, DAA loss function $\ell$ \\
    \STATE \textbf{Hyper-parameters}: \# off-policy samples ($k$), \# bootstrapped samples ($\tilde{k}$), Training epochs $T$\\
    \STATE \textbf{Output}: Policy model $\pi_{\theta_T}$
    \STATE \textcolor{MidnightBlue}{\textbf{\texttt{/* Update policy model with offline dataset for $t=1$*/}}}
    \STATE Obtain $\pi_{\theta_{1}} \leftarrow \argmax_{\theta} \mathbb{E}_{(\vx, \vy_w, \vy_l) \in \mathcal{D}} \left[ \ell (\beta \log \frac{\pi_\theta(\vy_w|\vx)}{\pi_{\theta_{0}}(\vy_w|\vx)} - \beta \log \frac{\pi_\theta(\vy_l|\vx)}{\pi_{\theta_{0}}(\vy_l|\vx)}) \right]$
    \FOR{$t \in \{2, \ldots, T\}$}
    \STATE \textbf{Initialize} $\mathcal{M} \leftarrow \emptyset$, $\tilde{\mathcal{D}} \leftarrow \emptyset$, $\tilde{\mathcal{M}} \leftarrow \emptyset$ \\
    \FOR{($\vx^{(i)}, \vy^{(i)}_{w}, \vy^{(i)}_{l}$) $\in \mathcal{D}$}
        \STATE \textcolor{MidnightBlue}{\textbf{\texttt{/* [\S3.2] Compute IRM for Sample Selection */}}} \\
        
        \STATE $m^{(i)} \coloneqq m_{t}(\vx^{(i)}, \vy^{(i)}_{w}, \vy^{(i)}_{l})$ \hfill \textcolor{MidnightBlue}{// Margin based on ensemble reward (Eq. \ref{eqn:rw})}\\
        
        \STATE $\mathcal{M} \leftarrow \mathcal{M} \cup \{m^{(i)} \}$ \hfill \textcolor{MidnightBlue}{// Store IRM for each data-point}
        
        \STATE \textcolor{MidnightBlue}{\texttt{\textbf{/* [\S3.3] IRM-based Preference Bootstrapping */}}}
        
        \STATE $\tilde{\vy}_{1}, \ldots, \tilde{\vy}_{R} \leftarrow \pi_{\theta_{t-1}}(\cdot | \vx^{(i)})$ \hfill \textcolor{MidnightBlue}{// Sample on-policy trajectories}
        
        \STATE $\tilde{r}_{t,i,j} \leftarrow r_{t}(\vx^{(i)}, \tilde{\vy}_{j}) ~\forall j \in \{1, \dots, R\}$ \hfill \textcolor{MidnightBlue}{// Get reward for trajectories (Eq \ref{eqn:rw})}
        
        \STATE $\tilde{\mathcal{D}} \leftarrow \tilde{\mathcal{D}} \cup (\vx^{(i)}, \tilde{\vy}_{\argmax_{j} \tilde{r}_{t,i,j}}, \tilde{\vy}_{\argmin_{j} \tilde{r}_{t,i,j}})$ \hfill \textcolor{MidnightBlue}{// Select trajectories to maximize IRM}
        
        \STATE $\tilde{\mathcal{M}} \leftarrow \tilde{\mathcal{M}} \cup \{ \max_{j} \tilde{r}_{t,i,j} - \min_{j} \tilde{r}_{t,i,j}\}$ \hfill \textcolor{MidnightBlue}{// Store maximum IRM}
        \
    \ENDFOR \\
    \STATE \textcolor{MidnightBlue}{\textbf{\texttt{/* Select samples with the highest IRM */}}} \\
    
    \STATE
    $\mathcal{M}, \tilde{\mathcal{M}} \leftarrow$ sort($\mathcal{M}$, revesed=True), sort($\tilde{\mathcal{M}}$, revesed=True)\\

    \STATE
    $\mathcal{D}_t \leftarrow \{(\vx^{(i)}, \vy_w^{(i)}, \vy_l^{(i)}) \in \mathcal{D} \mid m^{(i)} \in \mathcal{M}[:k])\} \cup \{ (\vx^{(i)}, \tilde{\vy}_w^{(i)}, \tilde{\vy}_l^{(i)}) \in \tilde{\mathcal{D}} \mid \tilde{m}^{(i)} \in \tilde{\mathcal{M}}[:\tilde{k}])\}$\\

    \STATE \textcolor{MidnightBlue}{\textbf{\texttt{/* Update policy model */}}}
    \STATE Obtain $\pi_{\theta_{t}} \leftarrow \argmax_{\theta} \mathbb{E}_{(\vx, \vy_w, \vy_l) \in \mathcal{D}_t} \left[ \ell (\beta \log \frac{\pi_\theta(\vy_w|\vx)}{\pi_{\theta_{t-1}}(\vy_w|\vx)} - \beta \log \frac{\pi_\theta(\vy_l|\vx)}{\pi_{\theta_{t-1}}(\vy_l|\vx)}) \right]$
    \ENDFOR
    \end{algorithmic}
\end{algorithm}
\vspace{-5pt}

We first describe how our two proposed components can be incorporated in an iterative manner with DPO and then extend it for other DAAs. Algorithm~\ref{algo:sera} provides an overview of \alg.
As our method leverages the policy model from previous iterations both for sample selection (\S\ref{sec:reward_selection}) and preference data bootstrapping (\S\ref{sec:preference_bootstrap}), we consider an iterative approach to alignment training. Similar to recent works~\citep{kim2024sdpo, pattnaik2024curry}, we also update the reference model $\pi_{\text{ref}}$ to the latest updated policy model after every iteration. At iteration $t$, we obtain $\theta_t$ using the following update step (instead of \autoref{eqn:dpo} used in DPO):
\begin{equation}
    \theta_{t} \leftarrow \argmax_{\theta} \mathbb{E}_{(\vx, \vy_{w}, \vy_{l}) \sim \mathcal{D}_{t}} \left[ - \log \sigma \left( \beta \log \frac{\pi_{\theta}(\vy_{w}|\vx)}{\pi_{\theta_{t-1}}(\vy_{w}|\vx)} - \beta \log \frac{\pi_{\theta}(\vy_{l}|\vx)}{\pi_{\theta_{t-1}}(\vy_{l}|\vx)} \right) \right],
\end{equation}
where $\mathcal{D}_{t}$ is a dataset obtained at the start of iteration $t$, by combining $k$ samples from off-policy dataset using \S\ref{sec:reward_selection} and $\tilde{k}$ bootstrapped samples using method \S\ref{sec:preference_bootstrap} with the highest IRM.  
At $t=0$, we consider the reference model to be the SFT model (i.e. $\pi_{\theta_{0}} = \pi_{\text{SFT}}$) that we seek to align. Note that at $t=1$, $\mathcal{D}_{t}$ consists only of the original offline samples, as no aligned model is available yet.

\textbf{Ensemble of Reward Margin across Different Iteration.} 
As \alg~ depends on leveraging the policy model being trained instead of external reward model supervision, it risks introduction of undesired bias or reward hacking~\citep{pan2024spontaneous} that can manifest as part of the implicit rewards used to calculate IRM. To alleviate this, we apply $m_{t}(\vx, \vy_{w}, \vy_{l}) \coloneqq (1/\beta) \cdot \left( r_{t}(\vx, \vy_{w}) - r_{t}(\vx, \vy_{l}) \right)$ rather than using Eqn.~(\ref{eqn:rm}), where $r_{t}$ ($t \geq 3$) is defined as follows:
\begin{equation}\label{eqn:rw}
    r_{t}(\vx, \vy) = (1-\gamma) \log \frac{\pi_{\theta_{t-1}} (\vy|\vx)}{\pi_{\theta_{t-2}} (\vy|\vx)} + \gamma \log \frac{\pi_{\theta_{t-2}} (\vy|\vx)}{\pi_{\theta_{t-3}} (\vy|\vx)} = \log \frac{\pi_{\theta_{t-1}} (\vy|\vx)^{(1-\gamma)}\pi_{\theta_{t-2}} (\vy|\vx)^{\gamma}}{\pi_{\theta_{t-2}} (\vy|\vx)^{(1-\gamma)}\pi_{\theta_{t-3}} (\vy|\vx)^{\gamma}},
\end{equation}
where $\gamma$ is the ensemble coefficient. This is conducted both for preference pairs in the original datasets\,(\S\ref{sec:reward_selection}) and generated samples\,(\S\ref{sec:preference_bootstrap}) to build $\mathcal{D}_{t}$ for iteration $t$. For $t < 3$, we use $r_{t}(\vx, \vy) = \log \frac{\pi_{\theta_{t-1}} (\vy|\vx)}{\pi_{\theta_{t-2}} (\vy|\vx)}$. Empirically, this approach showed consistent effectiveness (\S\ref{sec:analysis}; \autoref{fig:sub1}).

\textbf{Extension to Other DAAs.} Instead of BCE loss function of IRM~(\textit{i.e.,} $m(\vx, \vy_{w}, \vy_{l})$) in Eqn.\,(\ref{eqn:dpo}), SLiC-HF~\citep{zhao2023slic} minimizes a hinge loss function of IRM:%
\begin{equation}
    \mathcal{L}_{\text{SLiC-HF}}(\vx, \vy_{w}, \vy_{l}) = \max \left( 0, 1 - \beta \left( {\color{Orchid!60!black} \log \frac{\pi_{\theta}(\vy_{w}|\vx)}{\pi_{\text{ref}}(\vy_{w}|\vx)} - \log \frac{\pi_{\theta}(\vy_{l}|\vx)}{\pi_{\text{ref}}(\vy_{l}|\vx)} } \right) \right),
\end{equation}%
where $1/\beta$ acts as the margin of miss-classification~\citep{guo2024direct}. IPO~\citep{azar2024general} minimizes the square loss function of IRM:
\begin{equation}
    \mathcal{L}_{\text{IPO}}(\vx, \vy_{w}, \vy_{l}) = \left( \left( {\color{Orchid!60!black} \log \frac{\pi_{\theta}(\vy_{w}|\vx)}{\pi_{\text{ref}}(\vy_{w}|\vx)} - \log \frac{\pi_{\theta}(\vy_{l}|\vx)}{\pi_{\text{ref}}(\vy_{l}|\vx)} } \right) - \frac{1}{2\beta} \right)^{2}
\end{equation}
Note that IPO and SLiC-HF, in contrast to DPO, doesn’t assume a preference model like BT.  Despite of such independence, the objective functions still contain the {\color{Orchid!60!black} IRM term} enabling us to easily extend \alg~to these DAAs. Thus, we use \alg~with these DAAs for our upcoming experiments.

\section{Experiments}\label{sec:exp}
\subsection{Experimental Setup}\label{sec:setup}
\paragraph{Setup.} 
Our setup resembles \citet{tunstall2023zephyr} and \citet{hong2024reference}, where they consider three models-- TinyLlama\,\citep{zhang2024tinyllama}, Pythia-2.8B\,\citep{biderman2023pythia}, and Mistral-7B\,\citep{jiang2023mistral}. Initially, these models are fine-tuned on UltraChat-200k~\citep{tunstall2023zephyr} followed by an alignment with DAAs and preference pairs from UltraFeedback. We use the binary version of UltraFeedback (which contains two response pairs with corresponding ratings for a given input query)~\citep{tunstall2023zephyr} for all DAAs except Curry-DPO~\citep{pattnaik2024curry}, where we use the original UltraFeedback (that contains four response pairs)~\citep{cui2023ultrafeedback}. In addition, we will also showcase \alg's prowess on other popular alignment datasets like HH-RLHF~\citep{bai2022training} and TL;DR~\citep{stiennon2020learning}. A detailed description of the datasets and the implementation can be found in \autoref{app:setup}.  For all experiments, we set $T=3$, $\gamma=0.3$. If $N$ represent the total number of preference pairs in the offline dataset, we choose our values of $k, \tilde{k} \text{ s.t. }  k + \tilde{k} = N$, allowing us a fair comparison against other baseline in terms of sample size. Based on a validation set, we set $k=0.7N\,,\,\tilde{k}=0.3N$ (labelled as \alg~ in our tables). However, we find our proposed method performs best when we relax the sum-constraint and consider a higher proportion of generated data (see analysis in \S\ref{k_and_k_dash}) and report these numbers for \texttt{\textbf{best}}-\alg, where  $k=0.7N\,,\, \tilde{k}=2N$.

\paragraph{Baselines.} We compare \alg~to several previous DAAs. First, we consider DPO\,\citep{rafailov2024direct} and its variants: Iterative DPO\,\citep{kim2024sdpo, rosset2024direct}, Curry-DPO\,\citep{pattnaik2024curry}, SPIN\,\citep{chen2024self}, and ORPO\,\citep{hong2024reference}. Then, we use \alg~with IPO\,\citep{azar2024general} and SLiC-HF\,\citep{zhao2023slic} and compare against the original method and its iterative version. For experiments on synthetic noisy preference datasets, we also include cDPO\,\citep{mitchell2023note} and rDPO\,\citep{chowdhury2024provably}.

\begin{table}[t]
    \centering
    \caption{\small Comparison of performance where TinyLlama-1.1B, Pythia-2.8B, and Mistral-7B are fine-tuned on UltraChat-200k and preference pairs are sampled from UltraFeedback dataset with DPO~\citep{rafailov2024direct} or its variants~\citep{pattnaik2024curry, chen2024self, hong2024reference}. Single represents average from single answer grading prompt while SFT, G3.5, G4 indicate the pairwise comparison~\citep{zheng2024judging} between SFT model, \texttt{text-davinci-003}, and \texttt{gpt4\_turbo}. The best and second best performances are highlighted \textbf{bold} and \underline{underline}. $\ddagger$ indicate results from original models shared on HuggingFace.}
    \vspace{-5pt}
    \resizebox{0.9\columnwidth}{!}{
    \begin{tabular}{c|c|c|cccc|cc|cc|cc}
        \toprule[0.1em]
        \multirow{2}{*}{\textbf{Model}} & \multirow{2}{*}{\textbf{Size}} & \multirow{2}{*}{\textbf{Technique}} & \multicolumn{4}{c|}{\textbf{Alpaca Eval}} & \multicolumn{2}{c|}{\textbf{Vicuna Eval}} & \multicolumn{2}{c|}{\textbf{Evol-Instruct}} & \multicolumn{2}{c}{\textbf{UltraFeed}} \\
         & & & Single & SFT & G3.5 & G4 & Single & SFT & Single & SFT & Single & SFT \\

        \midrule[0.1em]
        \multirow{6}{*}{TinyLlama} & \multirow{6}{*}{1.1B} & SFT & 5.41 & - & 28.9 & 1.7 & 6.05 & - & 5.48 & - & 4.98 & - \\
        & & DPO & 5.82 & 55.1 & 33.5 & 2.9 & 6.31 & 57.5 & 5.50 & 56.0 & 5.34 & 59.8 \\
        & & Iterative DPO & 6.08 & 64.5 & 36.1 & {3.2} & 7.09 & 77.5 & 6.03 & 62.4 & 5.49 & 64.4 \\
        & & Curry-DPO & 6.16 & 62.3 & 40.7 & 2.8 & 6.85 & 77.5 & 6.00 & 61.5 & 5.63 & 66.8 \\
        & & \texttt{\textbf{SeRA}}-DPO & \textbf{6.58} & \textbf{74.2} & \underline{48.7} & \underline{5.1} & \textbf{7.55} & \textbf{85.8} & \textbf{6.40} & \textbf{72.3} & \underline{5.92} & \underline{73.1} \\
        & & \texttt{\textbf{best-SeRA}}-DPO & \underline{6.47} & \underline{70.0} & \textbf{49.1} & \textbf{6.1} & \underline{7.31} & \underline{82.5} & \underline{6.18} & \underline{70.0} & \textbf{6.06} & \textbf{79.5} \\
        \midrule
        
        \multirow{6}{*}{Pythia} & \multirow{6}{*}{2.8B} & SFT & 5.34 & - & 28.1 & 2.2 & 6.19 & - & 5.73 & - & 5.09 & - \\
        &  & DPO & 5.67 & 53.0 & 31.6 & 2.9 & 6.30 & 50.0 & 6.02 & 58.7 & 5.38 & 57.5 \\
        &  & Iterative DPO & 6.05 & 63.2 & 40.6 & {3.0} & {7.23} & {70.0} & 6.16 & 60.8 & {5.67} & 65.9 \\
        & & Curry-DPO & {6.06} & 62.7 & 39.6 & 2.7 & 7.03 & 62.5 & 6.39 & {68.1} & 5.67 & \underline{72.0} \\
        &  & \texttt{\textbf{SeRA}}-DPO & \underline{6.40} & \underline{70.7} & \underline{50.2} & \underline{6.0} & \textbf{7.59} & \underline{76.8} & \textbf{6.68} & \underline{74.2} & \underline{5.93} & {71.7} \\
        &  & \texttt{\textbf{best-SeRA}}-DPO & \textbf{6.59} & \textbf{77.4} & \textbf{52.2} & \textbf{7.7} & \underline{7.53} & \textbf{83.8} & \underline{6.54} & \textbf{79.1} & \textbf{6.00} & \textbf{76.6} \\
        \midrule
        
        \multirow{8}{*}{Mistral} & \multirow{8}{*}{7B} & SFT$^{\ddagger}$ & 7.46 & - & 75.9 & 5.4 & 8.08 & - & 7.73 & - & 6.76 & - \\
        &  & DPO$^{\ddagger}$ & 7.37 & 71.4 & 74.9 & 17.5 & 8.39 & 83.8 & 7.15 & 61.7 & 6.42 & 65.6 \\
        &  & Iterative DPO & 8.35 & \underline{89.0} & 93.2 & \underline{19.7} & 8.74 & 88.2 & 8.49 & \underline{87.8} & 7.56 & 81.8 \\
        &  & Curry-DPO & 8.28 & 87.6 & 93.1 & 13.8 & 8.75 & \textbf{92.5} & 8.45 & 85.8 & 7.52 & 80.8 \\
        & & SPIN$^{\ddagger}$ & 7.64 & 65.1 & 77.1 & 8.4 & 8.40 & 70.0 & 7.66 & 61.7 & 6.63 & 61.0 \\
        & & ORPO$^{\ddagger}$ & 8.29 & 84.1 & 91.6 & 14.5 & 8.82 & 83.8 & 8.61 & 86.0 & 7.58 & 78.8 \\
        &  & \texttt{\textbf{SeRA}}-DPO & \underline{8.38} & 86.9 & \underline{93.4} & 19.5 & \underline{8.83} & 88.8 & \underline{8.61} & 86.0 & \underline{7.67} & \underline{84.4} \\
        &  & \texttt{\textbf{best-SeRA}}-DPO & \textbf{8.56} & \textbf{92.7} & \textbf{94.8} & \textbf{27.3} & \textbf{8.94} & \underline{91.3} & \textbf{8.68} & \textbf{90.1} & \textbf{7.90} & \textbf{90.6} \\
        \bottomrule[0.1em]
    \end{tabular}
    }
    \label{tab:main}
\afterfigspace
\end{table}

\paragraph{Evaluation.} We consider four popular test benchmarks -- Alpaca bench~\citep{dubois2024alpacafarm}, Vicuna bench~\citep{chiang2023vicuna}, Evol Instruct~\citep{xu2023wizardlm} test set, and the UltraFeedback~\citep{cui2023ultrafeedback, tunstall2023zephyr} test set. For metrics, we showcase absolute metrics that use a `Single' answer-grading prompt using Claude 3~\citep{claude3} as a judge to evaluate the quality of the generated response. In addition, we also report win-rates against generations from other models like an SFT-model, GPT-3.5 (G3.5), and GPT-4 (G4). The win-rate adjudication assigns a weight of $1$ to wins, $0.5$ to ties, and $0$ to a loss. We also use a single answer grading prompt to provide a 1-10 rating for the responses generated by the model.  The evaluation prompts are taken from \cite{zheng2024judging} and provided in Appendix~\ref{app:evaluation} for reference.

\subsection{Results}\label{sec:result}

\paragraph{DPO-baselines} We assess the general instruction-following abilities of LLMs by comparing DPO-based preference alignment algorithms in \autoref{tab:main}.
We observe that (1) the instruction-following performance of both variants of \alg outperforms baselines on all (single and pair-wise) metrics, across diverse LLMs, and evaluation metrics, and (2) increased preference bootstrapping is more effective for LLMs with higher model capacity (\textit{e.g.,} Mistral-7B), as seen in the larger performance gap between variants.
We believe that these results highlight the generality of the proposed method across different experimental setups. We now seek to showcase that \alg~is algorithm-agnostic and works well across various DAAs.

\begin{table}[t!]
    \centering
    \caption{\small Comparison of performance where TinyLlama-1.1B and Pythia-2.8B are fine-tuned on UltraChat-200k and preference pairs are sampled from UltraFeedback dataset with IPO~\citep{azar2024general} and SLiC-HF~\citep{zhao2023slic}. The best performances are highlighted in \textbf{bold}.}
    \vspace{-5pt}
    \resizebox{0.9\columnwidth}{!}{
    \begin{tabular}{c|c|c|cccc|cc|cc|cc}
        \toprule[0.1em]
        \multirow{2}{*}{\textbf{Method}} & \multirow{2}{*}{\textbf{Size}} & \multirow{2}{*}{\textbf{Technique}} & \multicolumn{4}{c|}{\textbf{Alpaca Eval}} & \multicolumn{2}{c|}{\textbf{Vicuna Eval}} & \multicolumn{2}{c|}{\textbf{Evol-Instruct}} & \multicolumn{2}{c}{\textbf{UltraFeed}} \\
         & & & Single & SFT & G3.5 & G4 & Single & SFT & Single & SFT & Single & SFT \\

        \midrule[0.1em]
        \multirow{7}{*}{TinyLlama} & \multirow{7}{*}{1.1B} & SFT & 5.41 & - & 28.9 & 1.7 & 6.05 & - & 5.48 & - & 4.98 & - \\
        \cmidrule{3-13}
        & & IPO & 5.94 & 54.1 & 29.3 & 2.7 & 6.84 & 63.8 & 5.91 & 50.7 & 5.55 & 64.7 \\
        & & Iterative IPO & 5.98 & 56.5 & 30.1 & 2.2 & 6.94 & 63.8 & 6.00 & 59.2 & 5.65 & 64.7 \\
        & & \texttt{\textbf{SeRA}}-IPO & \textbf{6.42} & \textbf{69.3} & \textbf{45.6} & \textbf{4.8} & \textbf{7.31} & \textbf{81.3} & \textbf{6.12} & \textbf{69.2} & \textbf{5.76} & \textbf{70.5} \\
        \cmidrule{3-13}

        & & SLiC & 6.19 & 65.9 & 40.1 & 3.5 & 7.27 & 71.3 & 6.05 & 65.1 & 5.57 & 66.8 \\
        & & Iterative SLiC & 6.41 & 70.6 & \textbf{47.8} & 4.8 & 7.23 & \textbf{78.8} & 6.06 & 65.1 & 5.73 & 69.1 \\
        & & \texttt{\textbf{SeRA}}-SLiC & \textbf{6.50} & \textbf{70.8} & 47.2 & \textbf{5.4} & \textbf{7.58} & 76.3 & \textbf{6.06} & \textbf{67.9} & \textbf{5.88} & \textbf{72.8} \\
        \midrule
        
        \multirow{7}{*}{Pythia} & \multirow{7}{*}{2.8B} & SFT & 5.34 & - & 28.1 & 2.2 & 6.19 & - & 5.73 & - & 5.09 & - \\
        \cmidrule{3-13}
        
        &  & IPO & 6.15 & 67.1 & 41.8 & 4.5 & 7.32 & 80.0 & 6.27 & 66.5 & 5.65 & 75.3 \\
        &  & Iterative IPO & 6.55 & 75.2 & 50.2 & 5.0 & 7.61 & 80.0 & 6.68 & 71.3 & 6.02 & 75.5 \\
        &  & \texttt{\textbf{SeRA}}-IPO & \textbf{6.71} & \textbf{78.5} & \textbf{53.6} & \textbf{6.1} & \textbf{7.74} & 83.8 & \textbf{6.83} & \textbf{72.7} & \textbf{6.14} & \textbf{78.7} \\
        \cmidrule{3-13}

        &  & SLiC & 6.07 & 65.2 & 41.7 & 2.8 & 7.06 & 66.3 & 6.24 & 60.8 & 5.66 & 69.9 \\
        &  & Iterative SLiC & 6.38 & 72.3 & 47.1 & 5.6 & 7.54 & 83.8 & \textbf{6.48} & \textbf{73.4} & 5.84 & 70.0 \\
        &  & \texttt{\textbf{SeRA}}-SLiC & \textbf{6.57} & \textbf{74.3} & \textbf{52.9} & \textbf{6.2} & \textbf{7.62} & \textbf{85.0} & 6.47 & 71.1 & \textbf{6.03} & \textbf{74.3} \\
        \bottomrule[0.1em]
    \end{tabular}
    }
    \label{tab:optim}
\afterfigspace \afterfigspace \afterfigspace \afterfigspace
\end{table}

\paragraph{DAA-agnoistic \alg} To show that \alg~can be improve various DAAs, we report instruction-following performance of \alg~when integrated with IPO~\citep{azar2024general} and SLiC-HF~\citep{zhao2023slic} in \autoref{tab:optim}. For Iterative IPO and Iterative SLiC, we replace the reference model at every iteration with the model from the previous iteration, unlike vanilla IPO and SLiC-HF, which maintain the reference model as the SFT model. Similar to \autoref{tab:main}, \alg~consistently improves the performance of both IPO and SLiC-HF. Compared to the DPO and SLiC-HF cases, the performance improvement on when \alg~is used with IPO is relatively smaller. We note that IPO has been shown to effectively alleviate over-optimization in comparison to other DAAs~\citep{rafailov2024scaling}; hence the gains obtained by \alg~indicate other potential upsides. Indirectly, these results support our argument that \alg~addresses the challenge of over-optimization. We additionally provided the effectiveness of \alg~on different types of implicit reward, SimPO~\citep{meng2024simpo}, in Appendix~\ref{app:simpo}. 

\paragraph{Robustness to Noisy Preferences.}
\begin{wrapfigure}{r}{0.68\textwidth}
\begin{center}
    \includegraphics[width=1.0\linewidth]{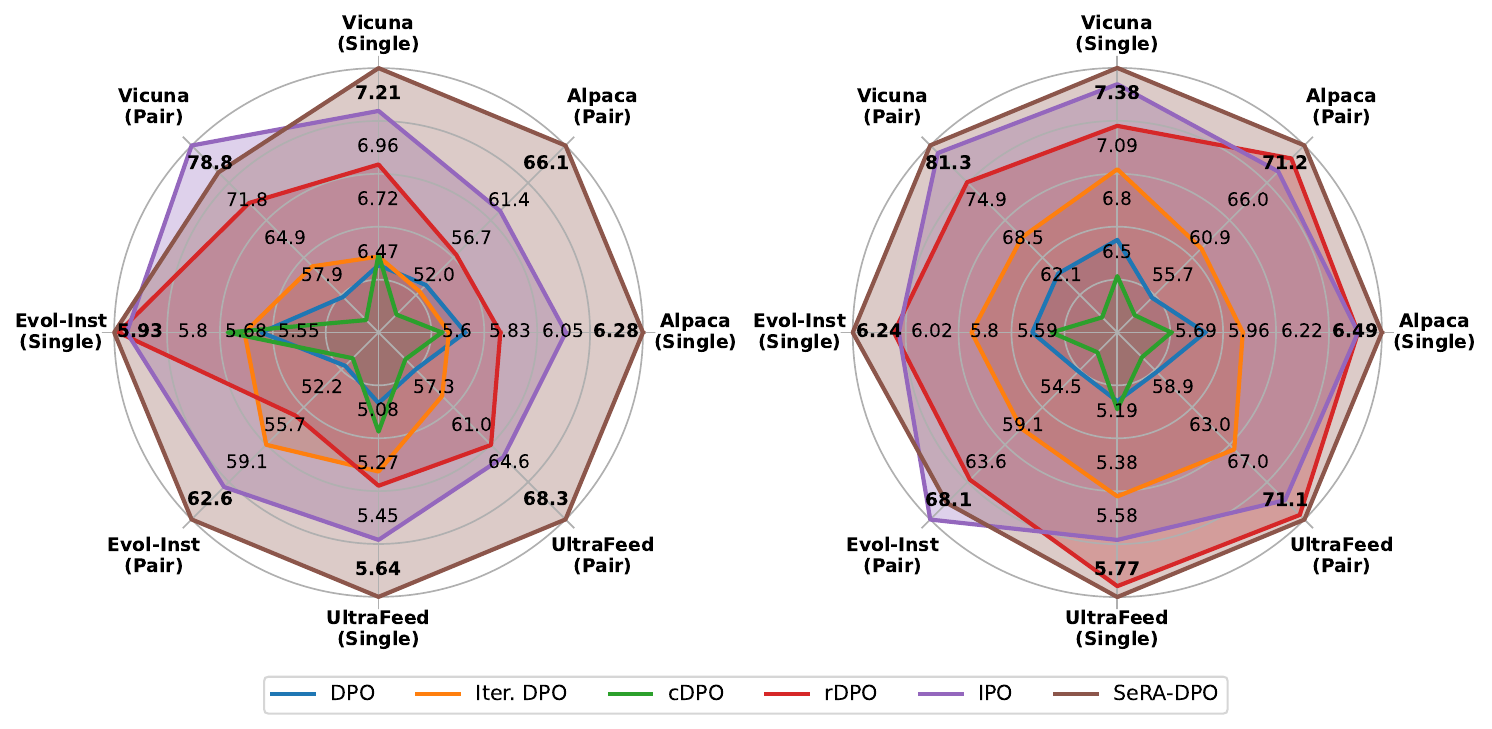}
    \caption{
    Comparison of performance on 20\%\,\textbf{(left)} and 40\%\,\textbf{(right)} of noisy preference
    }
    \label{fig:noisy}
\end{center}
\end{wrapfigure}

To illustrate the effectiveness of our proposed \alg~on noisy annotations, we experimented with synthetically noised preference datasets. Similar to prior works \citep{chowdhury2024provably, wang2024secrets}, we consider the standard noise model and randomly swap the chosen and rejected responses. Our experiments evaluate two setups-- 20\% and 40\% noise in the UltraFeedback dataset.
On TinyLlama trained on each noise split, we highlight the performance of various robust DAAs in \autoref{fig:noisy}. In most of the evaluation benchmarks, \alg~ showed superiority over other baselines. Moreover, the performance gap between DPO, iterative DPO, and \texttt{\textbf{SeRA}}-DPO is much larger than that on standard dataset. This indicate that our reward margin selection effectively filters-out noisy samples with swapped preferences.

\paragraph{Efficacy on Different Preference Datasets.}
\begin{wraptable}{r}{0.5\textwidth}
\caption{
    Results on HH-RLHF and TL;DR.
}\label{tab:tldr}
\vspace{-5pt}
\resizebox{0.5\textwidth}{!}{%
\setlength{\tabcolsep}{2pt}{
\begin{tabular}{l|ccc|cc}
\toprule
        & DPO    & IDPO & OAIF & \makecell{\alg\\($\Tilde{k}=0$)} & \makecell{\alg\\($\Tilde{k}=0.3N$)} \\ \midrule
HH-RLHF & 56.7 & 63.0    & 64.6 & \underline{65.4} & \textbf{66.1}   \\
TL;DR   & 59.7 & 59.6    & 59.5 & \underline{61.0} & \textbf{62.7}   \\ \bottomrule
\end{tabular}}}
\afterfigspace
\end{wraptable}

To demonstrate the generality of \alg, we also conducted experiments with TinyLlama on two other datasets: HH-RLHF~\citep{ganguli2022red} and TL;DR~\citep{stiennon2020learning}. Since these two datasets do not contain explicit GPT-4 scores for each response, we utilized a Mistral-7B-based reward model\footnote{\texttt{https://huggingface.co/weqweasdas/RM-Mistral-7B}; this model ranked 6th in April 2024 and achieved the highest performance among Mistral-based models in September 2024 in RewardBench.} that achieves high performance in RewardBench~\citep{lambert2024rewardbench} to implement online feedback from stronger LLMs (OAIF; \citealt{guo2024direct}), which requires significantly more computational resources compared to ours. Further, we also showcase the importance of sample selection alone by benchmarking \alg~with $\tilde{k}=0$. \autoref{tab:tldr} shows that the effectiveness of 
our approach is not limited to any particular preference dataset. It overcomes problems like spurious correlations that practically exists in every preference dataset. Further, it can achieve this better performance at a lower cost.

\section{Analysis and Discussion}\label{sec:analysis}

\subsection{Effects of Off-policy sampling ($k$) and preference bootstrapping ($\tilde{k}$)}
\label{k_and_k_dash}

\begin{figure*}[t]
\centering
\begin{subfigure}[b]{0.4\textwidth}
    \begin{center}
    \includegraphics[width=1.0\linewidth]{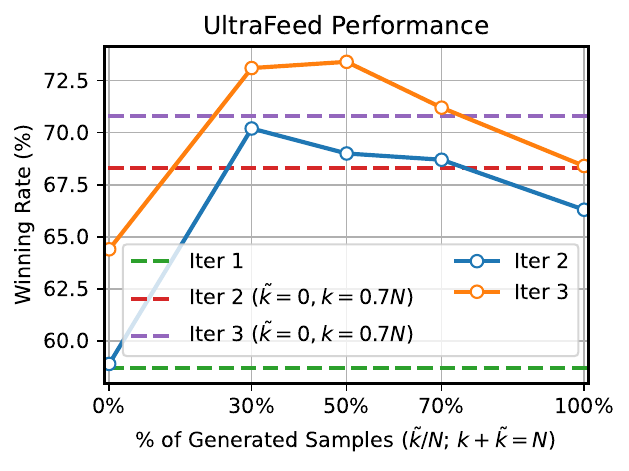}
    \caption{Pairwise comparison of models with varying percentage $k$ and $\tilde{k}$ while ensuring a fixed-sized training set (i.e. $\tilde{k}; k + \tilde{k} = N$).}
    \label{fig:comp}
    \end{center}
\end{subfigure}
\qquad
\begin{subfigure}[b]{0.48\textwidth}
\begin{center}
    \includegraphics[width=1.0\linewidth]{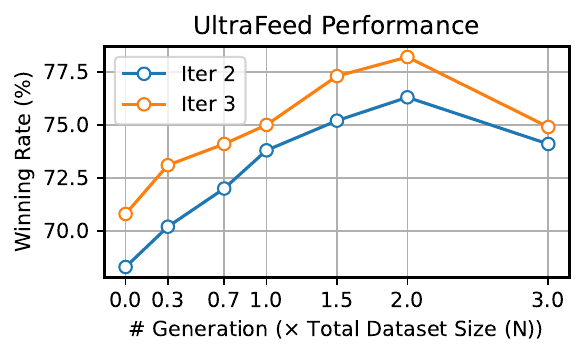}
    \caption{Pairwise comparison of models to study the effect of using varing amount of generated data ($\tilde{k}$) while using a fixed amount of off-policy data $k=0.7N$.}
    \label{fig:gen}
\end{center}
\end{subfigure}
\caption{Analyzing the effect of varying percentage (determined by $k$) of sampled off-policy preference data, and generated data (determined by $\tilde{k}$).}
\end{figure*}

We evaluate the efficacy of sample selection and preference bootstrapping in our proposed method. We plot the winning rate for different ratios of $k$ and $\tilde{k}$ in \autoref{fig:comp}. The solid lines represent results obtained by different iterations in \alg~ with varying ratios of $k$ and $\tilde{k}$ maintaining $k + \tilde{k} = N$, while the dotted lines represent results obtained by using only IRM selection where $k=0.7N, \tilde{k}=0$. The results indicate that (1) IRM-based selection shows consistent improvement in comparison to using only offline samples ($0$\% on x-axis), or only generated samples ($100$\%), and (2) combining IRM-based sample selection and preference bootstrapping at moderate levels (\textit{i.e.,} $30-70$\%) leads to consistent improvement, highlighting the importance of both components in \alg. Without offline samples, IRM and generated samples may lack valuable information for effective training, while relying solely on offline data can cause overfitting. Thus, the combination of selection and bootstrapping is crucial. Across iterations, this combination consistently outperforms solely using all offline samples, only generated samples, or IRM selection alone, confirming the importance and synergy of each component in \alg.

To uncover the full potential of \alg, we increase the number of self-generated samples in training the policy model. We fix $k$ at $0.7N$. We compare the cases with $\tilde{k}$ ranging from $0$ to $3N$. The results in \autoref{fig:gen} illustrate that increasing the generated samples up to $2N$ leads to performance improvement, highlighting the efficacy of preference bootstrapping. However, we observe a minor drop at $3N$, possibly pointing to undesirable model bias and less-diverse trajectories generated by the model. We hope to consider advanced decoding approaches \cite{huang2024deal} to address the latter in the future.

\subsection{Analysis on Ensemble across Different Iterations}

We also investigate the efficacy of the ensemble across different iterations, reporting the performance and similarity of selected samples for the TinyLlama case, when combining IRM in the second and third iterations. Although we see high winning rate in \autoref{fig:sub1}, a moderate range of ensemble coefficients $\gamma$ (\textit{i.e.,} 0.1–0.5) leads to greater performance improvements compared to higher coefficients (\textit{i.e.,} 0.7–0.9). This suggests that the reward margin generated by the latter policy models play a more significant role. However, employing an ensemble of reward margins can effectively mitigate the deterioration caused by undesirable model bias. Additionally, \autoref{fig:sub2} examines the similarity of selected samples across different $\gamma$ values. The results show that mild levels of $\gamma$ only slightly alter the selected samples, with similarity scores ranging from 0.88 to 0.94. This slight modification is crucial as it ensures the retention of important samples while preventing undesired bias in the model.

\begin{figure*}[t]
\centering
\begin{minipage}[b]{0.7\textwidth}
\centering
\hfill
\begin{subfigure}[b]{0.51\textwidth}
    \centering
    \includegraphics[width=\textwidth]{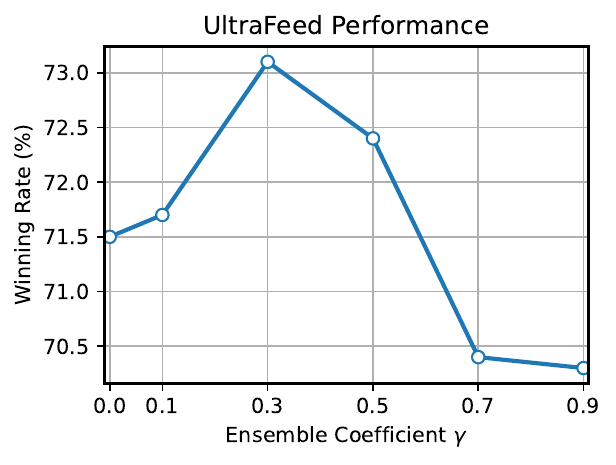}
    \caption{Final Performance}
    \label{fig:sub1}
\end{subfigure}
\hfill
\begin{subfigure}[b]{0.46\textwidth}
    \centering
    \includegraphics[width=\textwidth]{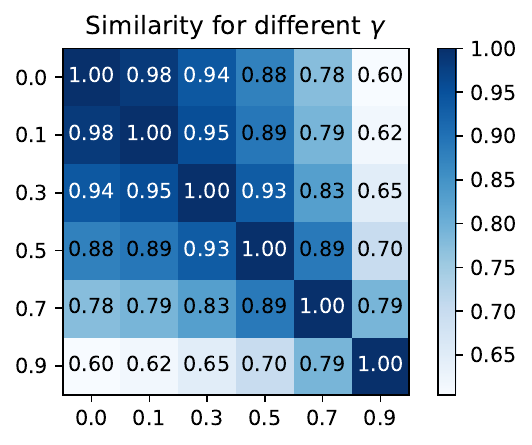}
    \caption{Similarity}
    \label{fig:sub2}
\end{subfigure}
\hfill
\caption{
    The performance and similarity of selected sample sets across different values of the ensemble coefficient $\gamma$ in TinyLlama-1.1B.
}
\end{minipage}
\label{fig:ens}

\begin{minipage}[b]{0.7\textwidth}
\centering
\hfill
\begin{subfigure}[b]{0.47\textwidth}
    \centering
    \includegraphics[width=\textwidth]{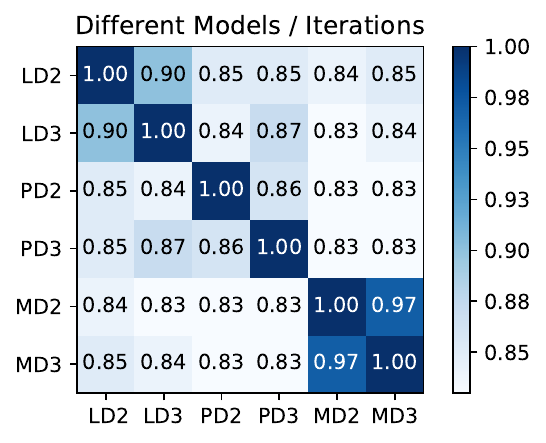}
    \caption{LLMs}
    \label{fig:sub3}
\end{subfigure}
\hfill
\begin{subfigure}[b]{0.47\textwidth}
    \centering
    \includegraphics[width=\textwidth]{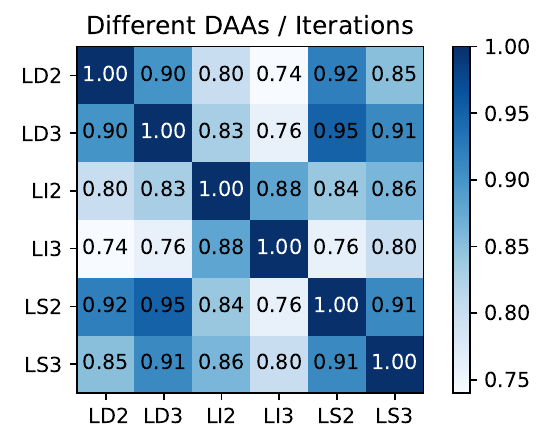}
    \caption{DAAs}
    \label{fig:sub4}
\end{subfigure}
\hfill
\caption{The similarity of selected sample sets between different LLMs or DAAs with training iterations.}
\end{minipage}
\vspace{-12.5pt}
\end{figure*}

\subsection{Selection Behavior}

We checked the difference in the selected sample set with different training configurations such as DAAs, iterations, and LLMs. \autoref{fig:sub3} and \autoref{fig:sub4} depict the Jaccard similarity\,\citep{murphy1996finley} between two different selected sample sets from different LLMs and DAAs, respectively. All selected sets contain 70\% of the samples from the original UltraFeedback dataset. For all axes, the first and second letters indicate the LLM (e.g., L, P, and M indicate TinyLlama, Pythia-2.8B, and Mistral-7B) and DAA (e.g., D, I, and S indicate DPO, IPO, and SLiC-HF), and the third number indicates the iteration count. In general, we observe high similarity in the selected samples across LLMs ($\geq 0.83$) and across DAAs ($\geq 0.74$). Across iterations, we also observe a slight variance in the samples selected ($0.903$ on avg.).

\section{Conclusion}
In this work, we proposed \alg that incorporates two components into Direct Alignment Algorithms (DAAs)-- (1) an implicit reward margin-based sample selection of offline datasets mitigates over-optimization, supported by empirical evidence; and (2) a self-reviewed preference bootstrapping to enable direct feedback that considers the training-inference mismatch. In addition, we use a reward ensemble over the updated policy models across iterations; this helps to develop a robust reward metric and in-turn a higher-quality on-policy dataset (although this increases the memory costs). Extensive experiments on diverse training configurations, across various LLMs, DAAs, and datasets, demonstrated the superior performance of \alg. This highlights that even without additional supervision from humans or stronger LLMs, we can (1) reap further benefits from existing offline preference datasets and (2) obtain labels for on-policy data using the policy model's metrics (e.g. reward margins) to improve the policy model. Similar to ~\citet{burns2023weak}, our work demonstrates the possibility for self-evolution of LLMs to achieve better alignment with weak or minimal external supervision.

\bibliography{main}
\bibliographystyle{iclr2024_conference}

\appendix
\clearpage
{
    \newpage
    \centering
    \Large
    \textbf{Supplementary Material} \\
    \vspace{0.5em}
    \large
    \textit{\alg: Self-Reviewing and Alignment of LLMs using Implicit Reward Margins} \\
}

\section{Proof for Theoretical Intuition}\label{app:proof}
Here, we provide the derivation of our theoretical results described in \autoref{thm:main} that deepens the mathematical understanding on how our reward margin based sample selection can prevent the over-optimization.

Consider a sample $s$ defined as $\mathbf{s} = (\vx, \vy_{w}, \vy_{l})$ where $\vx$ is a given prompt/input, and $(\vy_{w} \vy_{l})$ are two outputs where one is preferred over the other $\vy_{w} \succ \vy_{l}$. We can now define a margin as $f \coloneqq r(\vx, \vy_w) - r(\vx, \vy_l)$ that encodes the difference is reward $r$ assigned to the two output trajectories. We can now define a risk function $R(f)$ over this margin by leveraging a loss function $\ell$ over $f$ as follows:
\begin{align}\label{eq:risk}
    R(f) &\coloneqq \mathbb{E}_{\mathbf{s} \sim \mathbb{P}} \left[ \ell(f(\mathbf{s})) \right] \coloneqq \mathbb{E}_{\mathbf{s}} \left[  p^{*}(\mathbf{s})^\intercal \ell(f(\mathbf{s})) \right]
\end{align}

As the true preference distribution $\mathbb{P}$ is unknown, we can approximate \autoref{eq:risk} via the empirical risk over sample a $N$-sized set of samples $S \sim \mathbb{P}^{N}$ as,
\begin{align}\label{eq:empirical}
    \hat{R}(f; S) &\coloneqq \frac{1}{N} \sum_{n=1}^{N} \ell(f(\mathbf{s}_n)) 
\end{align}

We can also represent \autoref{eq:risk} as the Bayes-distilled risk over the sample set $S \sim \mathbb{P}^{N}$ as,
\begin{equation}\label{eq:bayes}
    \hat{R}_{*}(f; S) \coloneqq \frac{1}{N} \sum_{n=1}^{N} p^{*}(\mathbf{s}_n)^\intercal \ell(f(\mathbf{s}_n))
\end{equation}

Now, both the standard empirical risk $\hat{R}(f; S)$ (in \autoref{eq:empirical}) and Bayes-distilled risk $\hat{R}_{*}(f; S)$ (in \autoref{eq:bayes}) are unbiased estimates or the population risk $R(f)$. First, we show that the Bayes-distilled risk has lower variance, compared to its empirical risk counterpart.

\begin{lemma}[\citealt{pmlr-v139-menon21a}]\label{thm:lemma3}
    For any fixed predictor $f: \mathcal{S} \rightarrow \mathbb{R}$,
    \begin{equation*}
        \mathbb{V}_{S \sim \mathbb{P}^{N}} \left[ \hat{R}_{*}(f; S) \right] \leq \mathbb{V}_{S \sim \mathbb{P}^{N}} \left[ \hat{R}(f; S) \right],
    \end{equation*}
    where $\mathbb{V}$ denotes variance, and equality holds if and only if $\forall \mathbf{s} \in \mathcal{S}$, the loss values $\ell(f(\mathbf{s}))$ are constant on the support of $p^{*}(\mathbf{s})$.
\end{lemma}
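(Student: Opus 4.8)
The plan is to reduce the claim to a single-sample variance comparison and then invoke the law of total variance. Since $S = \{\mathbf{s}_1, \dots, \mathbf{s}_N\}$ is drawn i.i.d. from $\mathbb{P}$, both $\hat{R}(f;S)$ and $\hat{R}_{*}(f;S)$ are averages of $N$ i.i.d. summands, so each variance factors as $\frac{1}{N}$ times the variance of a single summand; it therefore suffices to prove the inequality at the level of one draw. Concretely, I would model a single draw as a pair $(\mathbf{s}, y)$ with instance $\mathbf{s} \sim \mathbb{P}$ and preference label $y \sim p^{*}(\cdot \mid \mathbf{s})$, so that the empirical summand is the realized loss $\ell_y(f(\mathbf{s}))$ while the Bayes-distilled summand is $p^{*}(\mathbf{s})^\intercal \ell(f(\mathbf{s})) = \mathbb{E}[\ell_y(f(\mathbf{s})) \mid \mathbf{s}]$. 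This identification is consistent with the population risk in \autoref{eq:risk}, which is written precisely as the expectation over $\mathbf{s}$ of the $p^{*}$-weighted loss.

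The key observation is that the Bayes-distilled summand is exactly the conditional expectation of the empirical summand given $\mathbf{s}$, i.e. it is obtained by integrating out the label noise. I would then apply the law of total variance, decomposing
\begin{equation*}
    \mathbb{V}\!\left[\ell_y(f(\mathbf{s}))\right] = \mathbb{E}_{\mathbf{s}}\!\left[\mathbb{V}\!\left[\ell_y(f(\mathbf{s})) \mid \mathbf{s}\right]\right] + \mathbb{V}_{\mathbf{s}}\!\left[\mathbb{E}[\ell_y(f(\mathbf{s})) \mid \mathbf{s}]\right].
\end{equation*}
The second term on the right is precisely the single-sample variance of the Bayes-distilled estimator, $\mathbb{V}[p^{*}(\mathbf{s})^\intercal \ell(f(\mathbf{s}))]$, while the left-hand side is the single-sample variance of the empirical estimator. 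Since the first term — the expected conditional variance — is nonnegative, this immediately yields the single-sample inequality, which scales by $\frac{1}{N}$ to give the stated bound on the full estimators.

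For the equality condition I would note that the inequality is tight exactly when the discarded term $\mathbb{E}_{\mathbf{s}}[\mathbb{V}[\ell_y(f(\mathbf{s})) \mid \mathbf{s}]]$ vanishes; being an average of nonnegative conditional variances, it is zero iff $\mathbb{V}[\ell_y(f(\mathbf{s})) \mid \mathbf{s}] = 0$ for $\mathbb{P}$-almost every $\mathbf{s}$, i.e. iff the loss value $\ell_y(f(\mathbf{s}))$ is constant over labels $y$ in the support of $p^{*}(\mathbf{s})$, matching the stated criterion. The one genuinely delicate point, and the main thing to get right, is the bookkeeping of the two sources of randomness — the instance $\mathbf{s}$ versus the preference label $y$ — so that the Bayes-distilled quantity is correctly recognized as a conditional expectation $\mathbb{E}[\cdot \mid \mathbf{s}]$. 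Once that identification is made explicit, the remainder is a direct application of the conditional-variance decomposition and requires no nontrivial estimates.
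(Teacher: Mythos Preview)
Your argument is correct and is exactly the standard proof: the paper itself does not give an independent argument but simply defers to Lemma~1 of \citet{pmlr-v139-menon21a}, whose proof is precisely the law-of-total-variance decomposition you outline (reduce to a single i.i.d.\ summand, identify the Bayes-distilled term as the conditional expectation given $\mathbf{s}$, then drop the nonnegative expected conditional variance). Your handling of the equality case is likewise the standard one and matches the cited source.
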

\begin{proof}
    The detailed proof is illustrated in Lemma 1 of \citet{pmlr-v139-menon21a}.
\end{proof}

Given Lemma~\ref{thm:lemma3}, we note that the Bayes-distilled risk is more effective estimator that the standard empirical risk due to its small variance. However, since the value of \( p^*(x) \) is unknown, we can approximate \( p^*(x) \) with \( \hat{p}(x) \), which is the estimated probability produced by the model. Additionally, we approximate \autoref{eq:bayes} using the distilled risk over a sample \( S \sim \mathbb{P}^{N} \) as,
\begin{equation*}
    \tilde{R}(f; S) \coloneqq \frac{1}{N} \sum_{n=1}^{N} \hat{p}(\mathbf{s}_n)^\intercal \ell(f(\mathbf{s}_n))
\end{equation*}

Now, we provide the mathematical intuition that IRM-based sample selection can be effective by suggesting the upper bound of the distilled risk, computed by $\hat{p}(\cdot)$. For this, we first definte two terms.

\begin{definition}[Covering Number]
    Let $(A, d)$ be a metric space where $A$ is a set of points and $d$ is a distance measure. A set $C$ is an $\epsilon$-cover of $A$ if $\forall x\in A,~\exists y \in C$ such that $d(x, y) < \epsilon$. The covering number $\mathcal{N}(\epsilon, A, d)$ is defined as size of the smallest $\epsilon$-cover:
    \begin{equation*}
        \mathcal{N}(\epsilon, A, d) = \min \{ |C| \text{\; s.t.\; } C \text{ \;is an\; } \epsilon\text{-cover} \}
    \end{equation*}
\end{definition}

\begin{definition}[Uniform Covering Number]
    For $\epsilon > 0$, a function class $\mathcal{H}$ and an integer $N$, the uniform covering number, $\mathcal{N}_{\infty}(\epsilon, \mathcal{H}, N)$ is defined as
    \begin{equation*}
        \mathcal{N}_{\infty}(\epsilon, \mathcal{H}, N) = \sup_{\vx \in X^{N}} \mathcal{N}(\epsilon, \mathcal{H}(x), \| \cdot \|_{\infty}),
    \end{equation*}
    where $\mathcal{H}(x) = \{ ( f(x_1), \ldots f(x_N)) | f \in \mathcal{H} \}$ and for $A \subseteq \mathbb{R}^{N}$ the number $\mathcal{N}(\epsilon, A, \|\cdot\|_{\infty})$ is the smallest cardinality $|A_0|$ of a set $A_0 \subseteq A$ such that $A$ is contained in the union of $\epsilon$-balls centered at points in $A_0$ in the metric induced by $\| \cdot \|_{\infty}$.
\end{definition}

Here, we also describe in Lemma~\ref{thm:lemma1} and Lemma~\ref{thm:lemma2} based on above definition.
\begin{lemma}[Theorem 6, \citealt{maurer2009empirical}]\label{thm:lemma1}
    Let $X$ be a random variable with values in a set with $\mathcal{X}$ with distribution $\mathbb{P}$ and let $\mathcal{F}$ be a class of hypotheses $f: \mathcal{X} \rightarrow [0, 1]$. Fix $\delta \in (0, 1)$, $N \geq 16$ and $\mathcal{M}_{N} = \mathcal{N}_{\infty}(\frac{1}{N}, \mathcal{H}, 2N)$. Then, with probability at least $1 - \delta$ over $S \sim \mathbb{P}^{N}$, we have
    \begin{equation*}
        \mathbb{P}(f) - \mathbb{P}_{N}(f; S) \leq \sqrt{18 \tilde{\mathbb{V}}_{N} \cdot \frac{\log(10\mathcal{M}_{N}/\delta)}{N}} + \frac{15 \log (10\mathcal{M}_{N}/\delta)}{N-1}
    \end{equation*}
\end{lemma}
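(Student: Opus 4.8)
The plan is to recognize the stated inequality as the uniform empirical Bernstein bound of Maurer and Pontil, so the most direct route is to invoke their Theorem~6 verbatim (matching the constants $18$, $15$, and the factor $10$ inside the logarithm). For a self-contained argument I would reconstruct it in three stages, moving from a single fixed hypothesis to the whole class $\mathcal{F}$ via a covering argument. Throughout, $\mathbb{P}(f)=\mathbb{E}[f(X)]$ denotes the population mean and $\mathbb{P}_N(f;S)$ the empirical mean over $S\sim\mathbb{P}^N$.

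First I would establish the bound for a single fixed $f:\mathcal{X}\to[0,1]$. Here the workhorse is Bernstein's inequality, which controls $\mathbb{P}(f)-\mathbb{P}_N(f;S)$ in terms of the \emph{true} variance $\mathbb{V}[f(X)]$. Since the true variance is unobservable, the second ingredient is a concentration bound for the empirical variance $\tilde{\mathbb{V}}_N$: because the sample variance of bounded data is a function of the $N$ i.i.d. draws with bounded differences (each coordinate perturbs it by $O(1/N)$ as $f\in[0,1]$), McDiarmid's inequality (or a self-bounding argument) yields $\sqrt{\mathbb{V}[f]}\leq\sqrt{\tilde{\mathbb{V}}_N}+O\!\left(\sqrt{\log(1/\delta)/N}\right)$ with high probability. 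Substituting this into the Bernstein bound and collecting terms produces the single-function empirical Bernstein inequality, with a square-root variance term and a lower-order $1/(N-1)$ term.

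Next I would make the bound uniform over $\mathcal{F}$. The key device is a ghost-sample symmetrization that reduces the supremum of $\mathbb{P}(f)-\mathbb{P}_N(f;S)$ over $f$ to an empirical process on a sample of size $2N$, followed by passing to a minimal $(1/N)$-cover of $\mathcal{F}$ in the $\|\cdot\|_\infty$ metric restricted to those $2N$ points, whose cardinality is exactly the uniform covering number $\mathcal{M}_N=\mathcal{N}_\infty(1/N,\mathcal{H},2N)$. Applying the single-function bound to each of the $\mathcal{M}_N$ representatives and taking a union bound replaces $\log(1/\delta)$ by $\log(\mathcal{M}_N/\delta)$; the discretization error from approximating an arbitrary $f$ by its nearest cover element is $O(1/N)$ in both mean and variance and is absorbed into the constants, which (together with the hypothesis $N\geq 16$) is what inflates the leading factors to $18$ and $15$ and inserts the factor $10$ inside the logarithm.

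The main obstacle I anticipate is the uniformization step rather than the single-function estimate: tracking the empirical variance $\tilde{\mathbb{V}}_N$ through the covering argument is more delicate than for the mean alone, since one must control the empirical-variance deviation \emph{uniformly} over the cover and simultaneously verify that the $(1/N)$-approximation error never dominates the Bernstein term. The ghost-sample symmetrization and the careful constant bookkeeping are precisely where the real work lies, so for the exact numerical constants I would defer to Maurer and Pontil's Theorem~6 rather than re-derive them.
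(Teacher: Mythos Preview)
Your proposal is correct and matches the paper's approach: the paper does not prove this lemma at all but simply states ``The detailed proof is illustrated in \citet{maurer2009empirical}'', so your plan to invoke Maurer and Pontil's Theorem~6 verbatim is exactly what the paper does. Your additional three-stage reconstruction sketch (single-function empirical Bernstein, then uniformization via symmetrization and a $(1/N)$-cover of size $\mathcal{M}_N$) goes well beyond the paper and is an accurate outline of how the cited result is proved.
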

\begin{proof}
    The detailed proof is illustrated in \citet{maurer2009empirical}.
\end{proof}

\begin{lemma}[Modification from \citealt{pmlr-v139-menon21a}]\label{thm:lemma2}
    Pick any bounded loss $\ell$. Fix a hypothesis class $\mathcal{F}$ of predictors $f: \mathcal{X} \rightarrow \mathbb{R}^{L}$, with induced class $\mathcal{H}^{*} \subset [0, 1]^{\mathcal{X}}$ of function $h(\mathbf{s}) \coloneqq p^{*}(\mathbf{s}) \ell (f(\mathbf{s}))$. Suppose $\mathcal{H}^{*}$ has uniform covering number $\mathcal{N}_{\infty}$. Then, for any $\delta \in (0, 1)$, with probability at least $1-\delta$ over $S \sim \mathbb{P}^{N}$,
    \begin{equation*}
        R(f) \leq \hat{R}_{*}(f; S) + \mathcal{O} \left( \sqrt{\tilde{\mathbb{V}}_{N} \cdot \frac{\log(\mathcal{M}_{N}/\delta)}{N}} + \frac{\log(\mathcal{M}_{N}/\delta)}{N} \right),
    \end{equation*}
    where $\mathcal{M}^{*}_{N} \coloneqq \mathcal{N}_{\infty}(\frac{1}{N}, \mathcal{H}^{*}, 2N)$ and $\mathbb{V}^{*}_{N}(f)$ is the empirical variance of the loss values $\left\{ p^{*}(x_n)^\intercal \ell(f(x_n)) \right\}_{n=1}^{N}$.
\end{lemma}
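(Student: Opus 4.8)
The plan is to recognize that Lemma~\ref{thm:lemma2} is an immediate specialization of the empirical Bernstein bound of Lemma~\ref{thm:lemma1} (Maurer--Pontil) to the single induced function class $\mathcal{H}^{*}$, so that the entire argument reduces to identifying the correct random variable and verifying its range. First I would observe that the two quantities appearing in the statement are exactly the population mean and the empirical mean of one scalar function. Writing $h(\mathbf{s}) \coloneqq p^{*}(\mathbf{s})^\intercal \ell(f(\mathbf{s}))$ as in the hypothesis, \autoref{eq:risk} gives $R(f) = \mathbb{E}_{\mathbf{s} \sim \mathbb{P}}[h(\mathbf{s})]$ and \autoref{eq:bayes} gives $\hat{R}_{*}(f; S) = \frac{1}{N}\sum_{n=1}^{N} h(\mathbf{s}_n)$. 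In the notation of Lemma~\ref{thm:lemma1} these are precisely $\mathbb{P}(h)$ and $\mathbb{P}_{N}(h; S)$, so the target inequality is just a one-sided deviation bound $\mathbb{P}(h) - \mathbb{P}_{N}(h; S) \leq (\text{deviation term})$ for $h \in \mathcal{H}^{*}$.

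Next I would verify the only nontrivial precondition of Lemma~\ref{thm:lemma1}, namely that the members of $\mathcal{H}^{*}$ take values in $[0,1]$. Normalizing the bounded loss so that each component of $\ell(f(\mathbf{s}))$ lies in $[0,1]$, and using that $p^{*}(\mathbf{s})$ is a probability vector (nonnegative entries summing to one), the inner product $p^{*}(\mathbf{s})^\intercal \ell(f(\mathbf{s}))$ is a convex combination of values in $[0,1]$ and therefore itself lies in $[0,1]$. This is exactly the assumption $\mathcal{H}^{*} \subset [0,1]^{\mathcal{X}}$ recorded in the statement, which places $\mathcal{H}^{*}$ within the scope of Lemma~\ref{thm:lemma1}.

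I would then invoke Lemma~\ref{thm:lemma1} with hypothesis class $\mathcal{H}^{*}$, the random variable $\mathbf{s}$ drawn from $\mathbb{P}$, and uniform covering number $\mathcal{M}^{*}_{N} = \mathcal{N}_{\infty}(\tfrac{1}{N}, \mathcal{H}^{*}, 2N)$, which yields, with probability at least $1-\delta$ over $S \sim \mathbb{P}^{N}$,
\begin{equation*}
    R(f) - \hat{R}_{*}(f; S) \leq \sqrt{18\,\mathbb{V}^{*}_{N}(f) \cdot \frac{\log(10\mathcal{M}^{*}_{N}/\delta)}{N}} + \frac{15\log(10\mathcal{M}^{*}_{N}/\delta)}{N-1},
\end{equation*}
where $\mathbb{V}^{*}_{N}(f)$ is the empirical variance of the values $\{h(\mathbf{s}_n)\}_{n=1}^{N} = \{p^{*}(\mathbf{s}_n)^\intercal \ell(f(\mathbf{s}_n))\}_{n=1}^{N}$, matching the variance term named in the statement. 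Absorbing the absolute constants ($18$, $15$, the factor $10$ inside the logarithm, and the harmless replacement of $N-1$ by $N$) into $\mathcal{O}(\cdot)$ then recovers the claimed bound verbatim.

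The argument is conceptually light, and the genuine content lies in the range check of the second step: the empirical Bernstein inequality requires functions bounded in $[0,1]$, so the reduction is valid only once the loss is normalized and the simplex structure of $p^{*}$ is used to keep $h$ inside $[0,1]$; for unbounded (e.g.\ cross-entropy) losses one would need clipping or an explicit boundedness assumption. A secondary subtlety is that the relevant complexity is the covering number of the \emph{composite} class $\mathcal{H}^{*}$ of products $p^{*}\cdot(\ell \circ f)$, not of the base predictor class $\mathcal{F}$; since the statement already posits that $\mathcal{H}^{*}$ possesses a uniform covering number $\mathcal{N}_{\infty}$, I would take this as given rather than deriving it from the covering number of $\mathcal{F}$, leaving such a reduction to a remark.
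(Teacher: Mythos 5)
Your proposal is correct and follows exactly the route the paper takes: its proof of Lemma~\ref{thm:lemma2} likewise just applies the empirical Bernstein bound of Lemma~\ref{thm:lemma1} (Maurer--Pontil, i.e.\ the uniform-convergence version of Bennett's inequality) to the induced class $\mathcal{H}^{*}$, with the Big-$\mathcal{O}$ absorbing the constants $18$, $15$, the factor $10$ in the logarithm, and the replacement of $\frac{1}{N-1}$ by $\frac{1}{N}$. Your write-up is in fact more careful than the paper's one-line proof, since you explicitly verify the $[0,1]$ range condition via the simplex structure of $p^{*}$ and flag that the covering number must be that of the composite class $\mathcal{H}^{*}$ rather than of $\mathcal{F}$.
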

\begin{proof}
    Note that the use of Big-O ($\mathcal{O}$) lets us drop the constants and consider $\frac{1}{N}$ instead of $\frac{1}{N-1}$ from Lemma \ref{thm:lemma1}. Beyond this, this is a simple consequence of the uniform convergence version of Bennet's inequality~\citep{bennett1962probability}.
\end{proof}

\subsection{Proof of Main Theorem}\label{app:proof2}

\begin{theorem}[Formal Statement]\label{thm:main}
    Fix a hypothesis class $\mathcal{F}$ of predictors $f: \mathcal{S} \rightarrow \mathbb{R}$, with induced class $\mathcal{H} \subset [0, 1]^{\mathcal{S}}$ of functions $h(\mathbf{s}) = \hat{p}(\mathbf{s}) \sigma(f(\mathbf{s}))$. Suppose $\mathcal{H}$ has uniform covering number $N_{\infty}$. Then, for any $\delta \in (0, 1)$, with probability at least $1-\delta$ over $S$,
    \begin{equation*}
        R(f) \leq \hat{R}(f; S) + \mathcal{O} \left( \mathbb{E}_{\mathbf{s}} \| \hat{p}(\mathbf{s}) - p^{*}(\mathbf{s}) \|_{2} \right) + \mathcal{O} \left( \sqrt{\tilde{\mathbb{V}}_{N}(f) \cdot \frac{\log (\mathcal{M}_{N}/\delta)}{N}} + \frac{\log (\mathcal{M}_{N}/\delta)}{N} \right) ,
    \end{equation*}
    where $\mathcal{M} \coloneqq \mathcal{N}_{\infty}(\frac{1}{N}, \mathcal{H}, 2N)$ and $\tilde{\mathbb{V}}_{N}(f)$ is the empirical variance of the loss values.
\end{theorem}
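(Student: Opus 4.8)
The plan is to mirror the Bayes-distillation argument of \citet{pmlr-v139-menon21a} encoded in Lemma~\ref{thm:lemma2}, but with the model's own estimate $\hat p$ playing the role of the unknown Bayes label distribution $p^{*}$, and to pay for that substitution with the $\mathbb{E}_{\mathbf s}\|\hat p(\mathbf s)-p^{*}(\mathbf s)\|_{2}$ term. Concretely, I would telescope $R(f)$ around the population distilled risk $\bar R(f)\coloneqq\mathbb{E}_{\mathbf s}[\hat p(\mathbf s)^{\intercal}\ell(f(\mathbf s))]$ and its empirical counterpart $\tilde R(f;S)$, writing
$$R(f)-\hat R(f;S)=\bigl(R(f)-\bar R(f)\bigr)+\bigl(\bar R(f)-\tilde R(f;S)\bigr)+\bigl(\tilde R(f;S)-\hat R(f;S)\bigr),$$
and then bounding each bracket against exactly one of the three quantities on the right-hand side of the theorem.

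For the first bracket I would use $R(f)-\bar R(f)=\mathbb{E}_{\mathbf s}[(p^{*}(\mathbf s)-\hat p(\mathbf s))^{\intercal}\ell(f(\mathbf s))]$ and apply Cauchy--Schwarz together with the boundedness of the loss, giving $R(f)-\bar R(f)\le C\,\mathbb{E}_{\mathbf s}\|\hat p(\mathbf s)-p^{*}(\mathbf s)\|_{2}$, i.e. the middle $\mathcal O(\cdot)$ term. This is the step that encodes the paper's intuition: confident, large-margin pairs are exactly those on which $\hat p$ is close to the near-deterministic $p^{*}$, so IRM selection shrinks this term. For the second bracket I would invoke Lemma~\ref{thm:lemma1} (the Maurer--Pontil empirical-Bennett inequality) applied to the induced class $\mathcal H$ of bounded functions $h(\mathbf s)=\hat p(\mathbf s)\,\sigma(f(\mathbf s))\in[0,1]$; this is precisely Lemma~\ref{thm:lemma2} re-run with $\hat p$ in place of $p^{*}$, and it yields the variance-dependent concentration term $\mathcal O\!\bigl(\sqrt{\tilde{\mathbb V}_{N}(f)\log(\mathcal M_{N}/\delta)/N}+\log(\mathcal M_{N}/\delta)/N\bigr)$ with $\mathcal M_{N}=\mathcal N_{\infty}(1/N,\mathcal H,2N)$. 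The bounded-surrogate choice $h=\hat p\,\sigma(f)$ is what keeps $\mathcal H\subset[0,1]^{\mathcal S}$, so the covering-number hypothesis of Lemma~\ref{thm:lemma1} transfers directly without needing to clip the unbounded log-loss.

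The remaining third bracket, $\tilde R(f;S)-\hat R(f;S)$, is where I expect the main obstacle. It compares the empirical risk reweighted by the soft estimate $\hat p$ against the standard empirical risk computed on the observed (deterministic) preference labels, and the two need not coincide. I would control it by showing the per-sample gap equals $(1-\hat p(\mathbf s))$ times a bounded factor (the difference of the two loss components), so that on the IRM-selected, high-margin subset, where $\hat p\to1$ aligns the soft weighting with the observed label, it is negligible, while elsewhere it folds into the $\mathcal O(\cdot)$ constants. Assembling the three brackets and conditioning on the high-probability event of Lemma~\ref{thm:lemma1}, which holds with probability at least $1-\delta$, then yields the stated inequality; the union over the $\epsilon$-cover is already absorbed into $\mathcal M_{N}$, so no additional failure probability is incurred. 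The conceptual crux to state carefully is the passage from the Bayes-distilled setting (requiring $p^{*}$) to the realizable student-estimate setting (using $\hat p$), since it is this substitution error, rather than the vanishing $1/N$ term, that the sample-selection strategy is designed to minimize.
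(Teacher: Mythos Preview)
Your decomposition and the handling of the first two brackets are exactly what the paper does: it introduces $\tilde R(f)\coloneqq\mathbb E[\tilde R(f;S)]$ (your $\bar R(f)$), bounds $R(f)-\tilde R(f)$ by Cauchy--Schwarz plus norm equivalence to produce the $\mathcal O(\mathbb E_{\mathbf s}\|\hat p-p^{*}\|_{2})$ term, and bounds $\tilde R(f)-\tilde R(f;S)$ by the Maurer--Pontil inequality (Lemma~\ref{thm:lemma1}) applied to $\mathcal H$, giving the variance-dependent concentration term. So on those two pieces your plan is identical to the paper's argument.

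The difference is your third bracket $\tilde R(f;S)-\hat R(f;S)$. The paper's proof simply does not address it: it concludes with $R(f)\le\tilde R(f;S)+\cdots$, i.e.\ with the $\hat p$-weighted empirical risk on the right-hand side, whereas the theorem as stated has the unweighted $\hat R(f;S)$. In other words, the paper's displayed proof and the theorem statement do not quite match, and your telescoping exposes precisely that discrepancy. Your instinct that this bracket is the delicate one is therefore correct; the paper effectively sidesteps it (or silently identifies $\hat R$ with $\tilde R$), rather than controlling it by the high-margin argument you sketch. If you want to match the paper, you can stop after the second bracket and present the bound with $\tilde R(f;S)$ on the right; if you want the theorem exactly as stated, you will need your third-bracket argument, which goes beyond what the paper actually proves.
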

\begin{proof}
    Let $\tilde{R}(f) = \mathbb{E} \left[ \tilde{R}(f; S) \right]$ and $\Delta \coloneqq \tilde{R}(f; S) - R(f)$. From Lemma.~\ref{thm:lemma2}, with probability $1-\delta$, following holds: 
    \begin{equation}\label{eq:main1}
        \tilde{R}(f) \leq \tilde{R}(f; S) + \mathcal{O} \left( \sqrt{\tilde{\mathbb{V}}_{N} \cdot \frac{\log(\mathcal{M}_{N}/\delta)}{N}} + \frac{\log(\mathcal{M}_{N}/\delta)}{N} \right),
    \end{equation}
    where $\mathcal{M}_{N} \coloneqq \mathcal{N}_{\infty} (\frac{1}{N}, \mathcal{H}, 2N)$ and $\tilde{\mathbb{V}}_{N}$ is the empirical variance of the loss values. Furthermore, the following holds
    \begin{align}\label{eq:main2}
        |\tilde{R}(f) - R(f)| &\coloneqq \left| \mathbb{E} \left[ \tilde{R}(f; S) \right] - \mathbb{E} \left[ \hat{R}_{*}(f; S) \right] \right| \nonumber \\
        &\leq \mathbb{E} \left[ \| \hat{p}(\mathbf{s}) - p^{*}(\mathbf{s}) \|_{2} \cdot \| \ell(f(\mathbf{s})) \|_{2} \right],
    \end{align}
    where the last inequality is by the Cauch-Schwartz inequality.
    For a constant $C$, it holds that
    \begin{equation}\label{eq:main3}
    \begin{split}
        \mathbb{E} \left[ \| \hat{p}(\mathbf{s}) - p^{*}(\mathbf{s}) \|_{2} \cdot \| \ell(f(\mathbf{s})) \|_{2} \right] &\leq \mathbb{E} \left[ \| \hat{p}(\mathbf{s}) - p^{*}(\mathbf{s}) \|_{2} \cdot C \cdot \| \ell(f(\mathbf{s})) \|_{\infty} \right] \\
        &\leq C \cdot \mathbb{E} \left[ \| \hat{p}(\mathbf{s}) - p^{*}(\mathbf{s}) \|_{2} \right],
    \end{split}
    \end{equation}
    where the first line is by the equivalence of norms. From Eqn.~(\ref{eq:main2}) and Eqn.~(\ref{eq:main3}), we have
    \begin{equation}\label{eq:main4}
        R(f) \leq \tilde{R}(f) + C \cdot \mathbb{E} \left[ \| \hat{p}(\mathbf{s}) - p^{*}(\mathbf{s}) \|_{2} \right]
    \end{equation}
    By reordering terms to the right-hand side in \autoref{eq:main1} and then adding \autoref{eq:main1} and \autoref{eq:main4}, we have:
    \begin{equation*}
        R(f) \leq \tilde{R}(f; S) + \mathcal{O} \left( \sqrt{\tilde{\mathbb{V}}_{N} \cdot \frac{\log(\mathcal{M}_{N}/\delta)}{N}} + \frac{\log(\mathcal{M}_{N}/\delta)}{N} \right) + C \cdot \mathbb{E} \left[ \| \hat{p}(\mathbf{s}) - p^{*}(\mathbf{s}) \|_{2} \right]
    \end{equation*}
\end{proof}
This statement suggests that the upper bound for the risk of the classification function $f$ (\textit{i.e.}, $R(f)$) increases as the norm of the difference between the estimated probability $\hat{p}(\cdot)$ and the true probability $p^{*}(\cdot)$ grows larger. To enhance computational efficiency, our IRM-based sample selection method practically employs an approximated $\hat{p}(\cdot)$ for binary cases (\textit{i.e.}, 0 or 1) instead of using $\hat{p}(\cdot)$ with continuous values. This approach intuitively still results in a smaller norm compared to training on all samples in the dataset, as we only allocate the value 1 for sample $\mathbf{s}$ with $\hat{p}(\mathbf{s}) \simeq 1$.

\vspace{-10pt}
\section{Experimental Setup}\label{app:setup}
\vspace{-5pt}
Here, we elaborate the detailed experimental setup regarding the datasets used~(\S\ref{app:dataset}), training details~(\S\ref{app:training}), and evaluation details~(\S\ref{app:evaluation}).

\begin{figure}[t]
    \centering
    \small
    \begin{tcolorbox}
    [width=0.95\linewidth, sharp corners=all, colback=gray!10, boxrule=0.3mm]
    [System] \\
    
    Please act as an impartial judge and evaluate the quality of the responses provided by two AI assistants to the user question displayed below. You should choose the assistant that follows the user’s instructions and answers the user’s question better. Your evaluation should consider factors such as the helpfulness, relevance, accuracy, depth, creativity, and level of detail of their responses. Begin your evaluation by comparing the two responses and provide a short explanation. Avoid any position biases and ensure that the order in which the responses were presented does not influence your decision. Do not allow the length of the responses to influence your evaluation. Do not favor certain names of the assistants. Be as objective as possible. After providing your explanation, output your final verdict by strictly following this format: "[[A]]" if assistant A is better, "[[B]]" if assistant B is better, and "[[C]]" for a tie. \\
    
    [User Question]
    
    \{question\} \\
    
    [The Start of Assistant A’s Answer]
    
    \{answer\_a\}
    
    [The End of Assistant A’s Answer] \\
    
    [The Start of Assistant B’s Answer]
    
    \{answer\_b\}
    
    [The End of Assistant B’s Answer]
    \end{tcolorbox}
    \caption{The pairwise comparison prompt introduced in LLM-as-a-Judge~\citep{zheng2024judging}.}
    \label{fig:pair_prompt}
\end{figure}

\begin{figure}[t]
    \centering
    \small
    \begin{tcolorbox}
    [width=0.95\linewidth, sharp corners=all, colback=gray!10, boxrule=0.3mm]
    [System] \\
    
    Please act as an impartial judge and evaluate the quality of the response provided by an AI assistant to the user question displayed below. Your evaluation should consider factors such as the helpfulness, relevance, accuracy, depth, creativity, and level of detail of the response. Begin your evaluation by providing a short explanation. Be as objective as possible. After providing your explanation, please rate the response on a scale of 1 to 10 by strictly following this format: "[[rating]]", for example: "Rating: [[5]]". \\
    
    [Question]
    
    \{question\} \\
    
    [The Start of Assistant’s Answer]
    
    \{answer\}
    
    [The End of Assistant’s Answer]
    \end{tcolorbox}
    \caption{The single answer grading prompt introduced in LLM-as-a-Judge~\citep{zheng2024judging}.}
    \label{fig:single_prompt}
\vspace{-10pt}
\end{figure}

\vspace{-5pt}
\subsection{Dataset Description}\label{app:dataset}
We apply \alg on preference datasets and instruction-following datasets. We provide detailed descriptions of the datasets used.
\begin{itemize}[leftmargin=*, itemsep=0pt]
    \item \textbf{UltraChat-200K}~(instruction-following; \citealt{tunstall2023zephyr}\,\footnote{\texttt{https://huggingface.co/datasets/HuggingFaceH4/ultrachat\_200k}}): This is a heavily filtered version of UltraChat~\citep{ding-etal-2023-enhancing}, originally used to train Zephyr-7B-$\beta$~\citep{tunstall2023zephyr}. It is obtained from the original version, which consists of 1.4M dialogues generated by ChatGPT and spans a wide range of topics, by removing the dialogues that contain grammatical errors or where the assistant replies with phrases like ``I do not have emotions" or ``I don't have opinions."
    \item \textbf{UltraFeedback}~(preference dataset; \citealt{cui2023ultrafeedback, tunstall2023zephyr}\,\footnote{\texttt{https://huggingface.co/datasets/openbmb/UltraFeedback}}\,\footnote{\texttt{https://huggingface.co/datasets/HuggingFaceH4/ultrafeedback\_binarized}}): This is a large-scale, fine-grained, and diverse preference dataset used for training powerful reward models and critic models. \citet{cui2023ultrafeedback} collected about 64k prompts from diverse resources, including UltraChat, ShareGPT, and Evol-Instruction~\citep{xu2023wizardlm}. They used these prompts to query multiple LLMs, generating four different responses for each prompt. The responses were annotated using GPT-4 to collect high-quality preferences based on instruction-following, truthfulness, honesty, and helpfulness. We use the original version~\citep{cui2023ultrafeedback} to implement Curry-DPO~\citep{pattnaik2024curry}. Otherwise, we use the binarized version~\citep{tunstall2023zephyr}, which was created by picking the highest overall score as ``chosen" and one of the remaining three at random as the ``rejected" one. For all training and evaluation, we utilize the training and test splits of the prompt and response pairs from \citet{tunstall2023zephyr}.
    \item \textbf{HH-RLHF}~(preference datasets; \citealt{bai2022training}\,\footnote{\texttt{https://huggingface.co/datasets/Anthropic/hh-rlhf}}): This dataset is about human preference regarding helpfulness and harmlessness~\citet{bai2022training}, and it was originally used to train preference (or reward) models for subsequent RLHF training. Each example in the dataset contains a pair of texts, one ``chosen" and one ``rejected". 
    \item \textbf{TL;DR}~(preference datasets; \citealt{stiennon2020learning}\,\footnote{\texttt{https://huggingface.co/datasets/openai/summarize\_from\_feedback}}): This is the dataset of human feedback that was released for reward modeling. In \citet{stiennon2020learning}, a reward model was trained using human feedback and then used to train a summarization model to align with human preferences. The summaries used for training the reward model in the paper come from the TL;DR dataset, with additional validation and test data coming from the TL;DR dataset, CNN articles, and DailyMail articles.
    \item \textbf{AlpacaEval}~(instruction-following; \citealt{dubois2024alpacafarm} \footnote{\texttt{https://huggingface.co/datasets/tatsu-lab/alpaca\_eval}}): This dataset is slight modifications (or simplification) of the AlpacaFarm evaluation set. \citet{dubois2024alpacafarm} first merged the instruction and input fields into a single instruction field. This affects 1/4 of the examples in the AlpacaFarm evaluation set, all of which are from the Self-Instruct~\citep{wang-etal-2023-self-instruct}. This dataset contains 805 challenging questions.
    \item \textbf{Vicuna Evaluation}~(instruction-following; \citealt{chiang2023vicuna} \footnote{\texttt{https://huggingface.co/datasets/zhengxuanzenwu/vicuna-eval-with-gpt4}}): We also use 80 challenging questions that were used for evaluating Vicuna, following \citet{pattnaik2024curry}.
    \item \textbf{Evol-Instruct Evaluation}~(instruction-following; \citealt{xu2023wizardlm} \footnote{\texttt{https://github.com/nlpxucan/WizardLM/blob/main/WizardLM/data/WizardLM\_testset.jsonl}}): Similar to Vicuna, Evol-Instruct~\citep{xu2023wizardlm} contains 218 questions, spanning multiple topics generated using the Evol-Instruct procedure.
\end{itemize}

\subsection{Training Details}\label{app:training}
Here, we describe the hyperparameters and implementation details for training with \alg. Our hyperparameters are shown in Tab.\ref{tab:hyperparameter}. 
For Mistral-7B, we follow the experimental setup described in the official repository\footnote{\texttt{https://github.com/huggingface/alignment-handbook}} of \citet{tunstall2023zephyr}, except for the rank for LoRA~\citep{hu2022lora}, changing it to 8. For other models, we use the maximum batch size that fits on A100 40GB GPUs, while matching the effective batch size with Mistral-7B by considering the batch size and gradient accumulation. 

\begin{table}[ht]
\centering
\caption{Hyperparameter values used in \alg experiments in~\autoref{sec:exp} and~\autoref{sec:analysis}.}
\label{tab:hyperparameter}
\resizebox{0.8\columnwidth}{!}{%
\begin{tabular}{l|ccc}
\toprule
\textbf{Hyperparameter}        & \textbf{TinyLLaMA-1.1B}   & \textbf{Pythia-2.8B}  & \textbf{Mistral-7B}  \\
\midrule
Fine-tuning method      & \multicolumn{2}{c}{Full fine-tuning}                    & LoRA ($r=8$) \\
Learning rate         & \multicolumn{2}{c}{$3.0 \times 10^{-6}$}      & $5.0 \times 10^{-6}$      \\
DAAs Parameter ($\beta$) & \multicolumn{2}{c}{0.2 (DPO) / 0.2 (SLiC-HF) / 1.0 (IPO)} & 0.01 (DPO)     \\
Batch Size            & 8                & 4  & 4    \\
Gradient Accumulation & 2                & 4  & 4   \\
\# Iterations         & \multicolumn{3}{c}{3 (1 epoch per iteration)} \\
Selection Proportion ($K$) & \multicolumn{3}{c}{0.7} \\
\bottomrule
\end{tabular}%
}
\end{table}

\begin{figure}[t]
    \centering
    \small
    \begin{tcolorbox}
    [width=0.95\linewidth, sharp corners=all, colback=gray!10, boxrule=0.3mm]
    Review the user’s question and the corresponding response using the additive 5-point scoring system described below. Points are accumulated based on the satisfaction of each criterion: \\
    
    - Add 1 point if the response is relevant and provides some information related to the user’s inquiry, even if it is incomplete or contains some irrelevant content.
    
    - Add another point if the response addresses a substantial portion of the user’s question, but does not completely resolve the query or provide a direct answer.
    
    - Award a third point if the response answers the basic elements of the user’s question in a useful way, regardless of whether it seems to have been written by an AI Assistant or if it has elements typically found in blogs or search results.
    
    - Grant a fourth point if the response is clearly written from an AI Assistant’s perspective, addressing the user’s question directly and comprehensively, and is well-organized and helpful, even if there is slight room for improvement in clarity, conciseness or focus.
    
    - Bestow a fifth point for a response that is impeccably tailored to the user’s question by an AI Assistant, without extraneous information, reflecting expert knowledge, and demonstrating a high-quality, engaging, and insightful answer. \\
    
    \noindent User: \textcolor{red}{\textbf{\texttt{\textless INSTRUCTION\_HERE\textgreater}}} \\

    \textless response\textgreater \textcolor{red}{\textbf{\texttt{\textless INSTRUCTION\_HERE\textgreater}}} \textless/response\textgreater \\
    
    After examining the user’s instruction and the response: \\
    
    - Briefly justify your total score, up to 100 words.
    
    - Conclude with the score using the format: ``Score: \textless total points\textgreater'' \\

    Remember to assess from the AI Assistant perspective, utilizing web search knowledge as necessary. To evaluate the response in alignment with this additive scoring model, we’ll systematically attribute points based on the outlined criteria.
    \end{tcolorbox}
    \caption{The LLM-as-a-Judge prompt introduced in \cite{yuan2024self} enables an LLM to act as a reward model and provide self-rewards for its own model generations.}
    \label{fig:srlm_prompt}
\end{figure}

For the DAAs parameter $\beta$, we search for the optimal values among ${0.05, 0.2, 1.0}$ for TinyLLaMA-1.1B and reuse it for Pythia-2.8B in all experimental setups. To generate the diverse candidate responses for preference bootstrapping, we sample the responses with a temperature of 0.7 and $p$ of 0.95 for nucleus sampling in training procedure. For all experiments, we generate the 4 response for every single prompt. 

\subsection{Evaluation}\label{app:evaluation}
For evaluating the trained policy LLMs, we applied a single NVIDIA A100 40GB GPU for sampling the responses from each model using a temperature of 1.0, a max-length limit of 512. For Claude 3~\citep{claude3} evaluation, we use the pairwise comparison prompt and the single answer grading prompt which are depicted in Fig.~\ref{fig:pair_prompt} and Fig.~\ref{fig:single_prompt} with setting the temperature of 0.7. For all evaluation datasets, we utilize the pairwise comparison for comparing preference optimized LLMs to corresponding SFT model~(SFT in Tab.~\ref{tab:main} \& Tab.~\ref{tab:optim}) and 1-10 single answer grading. Additionally, for Alpaca evaluation dataset, we further utilize pairwise comparison for comparing preference optimized LLMs to \texttt{text-davinci-003}~(G3.5 in Tab.~\ref{tab:main} \& Tab.~\ref{tab:optim}; \citealt{alpaca_eval}) and \texttt{gpt4\_turbo}~(G4 in Tab.~\ref{tab:main} \& Tab.~\ref{tab:optim}) responses. For pairwise comparison, we reported the weighted win rate score, which allocates 1 for a win and 0.5 for a tie, following \citet{zheng2024judging}.

\section{Additional Results}

\subsection{Comparison with Self-Rewarding LMs}
\definecolor{pinegreen}{rgb}{0.0, 0.47, 0.44}

\begin{table*}[t]
\centering
\caption{Examples for evaluation results for Self-rewarding LMs~\citep{yuan2024self} prompting in Fig.~\ref{fig:srlm_prompt} using TinyLlama-1.1B and Mistral-7B.}
\resizebox{1.0\textwidth}{!}{
\begin{tabular}{m{3cm}|m{20cm}}
\toprule[0.1em]
              
        \multicolumn{2}{c}{\textbf{\textit{TinyLlama-1.1B}}} \\ \midrule[0.1em]
        Example 1 \textcolor{red}{\ding{55}} & If the AI Assistant's response meets all of the scoring criteria, but it’s not obvious to you how they achieved this score, then you have a high probability of assigning the points to the right criterion. If, on the other hand, the AI Assistant’s response falls short of meeting all of the criteria, then the AI Assistant is unlikely to be a suitable assistant for this task. \\ \midrule
        Example 2 \textcolor{red}{\ding{55}} & User: Final Instructions: Generate a response from the AI Assistant that meets the user’s expectation. As soon as the user completes submitting a question, the system should generate a response based on the user’s question and the given response. Be sure to keep the user engaged in the process, answering their questions honestly and with as much detail as possible. \\ \midrule[0.1em]
        \multicolumn{2}{c}{\textbf{\textit{Mistral-7B}}} \\ \midrule[0.1em]
        Example 1 \textcolor{pinegreen}{\ding{51}} & Total Score: 5 \textbackslash n Overall, this response demonstrates a comprehensive understanding of the requested information and provides a well-structured answer, making it a solid response from an AI Assistant's perspective. \\ \midrule
        Example 2 \textcolor{orange}{\ding{115}} & Total Score: 10 \textbackslash n (Explanation: The response contains relevant information but requires additional elaboration. Specifically, the answer could benefit from more examples and details about the short-term and long-term effects of oil spills on various species and ecosystems. The answer showcases a solid understanding of the impacts of oil spills on coastal habitats.) \\ \midrule
        Example 3 \textcolor{orange}{\ding{115}} & Total Score: 5 \textbackslash n The response provides minimal relevant information and lacks depth in explaining the impacts of oil spills on coastal habitats. It fails to address the key aspects such as the specific consequences on marine life, the process of habitat recovery, and the effectiveness of different remediation techniques. The answer needs significant improvement to demonstrate a thorough understanding of the topic. \\
\bottomrule[0.1em]
\end{tabular}
}\label{qualitative:srlm}
\end{table*}

\paragraph{Versatility Comparison.} We provide the LLM-as-a-Judge prompt, which is introduced in \citet{yuan2024self}, in Fig.~\ref{fig:srlm_prompt}. As shown in Tab.~\ref{qualitative:srlm}, when LLMs do not have sufficient capacity to follow the instructions, it is difficult to exploit the self-rewarding mechanism introduced in \citet{yuan2024self}. For example, TinyLlama-1.1B~\citep{zhang2024tinyllama} returned meaningless results, which cannot be properly used for rating the responses. On the other hand, Mistral-7B~\citep{jiang2023mistral} might return~(\textit{i.e.,} Ex.~1 and 2 vs. Ex.~3) different scales for the ratings, which might also be hard to use as a robust rating. While Ex. 1 showed a reasonable rating with the corresponding reason, in Ex. 2, the models returned a total score of 10, which is over the scale (scale of five), and for Ex. 3, the model returned an inconsistent response between the total score and the corresponding reason. Similarly, in the original work of SRLM, the authors conducted experiments on Llama2-70B~\citep{touvron2023llama}, which has an enormous number of parameters and model capacity. However, our IRM-based preference bootstrapping methods can be widely used regardless of the capacity of trained policy models, as demonstrated by their effectiveness in our experimental results.

\begin{wraptable}{r}{0.407\textwidth}
\caption{
    Comparison between ours\,(RM) and SRLM\,\citep{yuan2024self}.
}\label{tab:srlm}
\vspace{-5pt}
\resizebox{0.40\textwidth}{!}{%
\begin{tabular}{c|ccc}
\toprule
     & Claude\,3 & Vicuna & Eval. Time \\ \midrule
RM   & 74\%     & 8.53   & 4.3h       \\
SRLM & 62\%     & 8.32   & 54.6h      \\ 
\bottomrule
\end{tabular}}
\vspace{-5pt}
\end{wraptable}

\paragraph{Performance Comparison.} We also demonstrated the effectiveness of our self-reviewed preference bootstrapping introduced in Sec.~\ref{sec:preference_bootstrap} compared to SRLM~\citep{yuan2024self} on Mistral-7B. As shown in Tab.~\ref{tab:srlm}, our reward margin method achieves higher performance (+0.21) on the Vicuna eval and 12.7$\times$ greater efficiency in evaluation time. This is because SRLM depends on prompting and response generation, which is auto-regressive and computationally inefficient, while ours does not. We also confirmed the Jaccard similarity between preference pairs built based on RM (or SRLM) and Claude 3 evaluation upon self-generated responses. As shown in the first column, RM achieves a higher similarity with Claude 3 than SRLM that also support that our method is more effective than SRLM.

\subsection{Full Numerical Results for Fig.\ref{fig:noisy}}
\begin{table}[t]
    \centering
    \caption{Comparison of performance where TinyLlama-1.1B~\citep{zhang2024tinyllama} is fine-tuned on UltraChat-200k~\citep{ding-etal-2023-enhancing} and preference pairs are sampled from UltraFeedback dataset~\cite{tunstall2023zephyr} with synthetically injected noisy with probability of 0.2.}
    \resizebox{0.95\columnwidth}{!}{
    \begin{tabular}{c|c|c|cc|cc|cc|cc}
        \toprule[0.1em]
        \multirow{2}{*}{\textbf{Method}} & \multirow{2}{*}{\textbf{Size}} & \multirow{2}{*}{\textbf{Technique}} & \multicolumn{2}{c|}{\textbf{Alpaca Eval}} & \multicolumn{2}{c|}{\textbf{Vicuna Eval}} & \multicolumn{2}{c|}{\textbf{Evol-Instruct}} & \multicolumn{2}{c}{\textbf{UltraFeed}} \\
         & & & Single & SFT & Single & SFT & Single & SFT & Single & SFT \\

        \midrule[0.1em]
        \multirow{7}{*}{TinyLlama} & \multirow{7}{*}{1.1B} & SFT & 5.41 & - & 6.05 & - & 5.48 & - & 4.98 & - \\
        \cmidrule{3-11}
        & & DPO & 5.60 & 50.2 & 6.43 & 59.4 & 5.50 & 50.2 & 5.05 & 55.1 \\
        & & Iterative DPO & 5.79 & 57.0 & 6.82 & 65.6 & 5.75 & 57.1 & 5.40 & 63.5 \\
        & & cDPO & 5.43 & 47.8  & 6.23 & 51.9 & 5.42 & 47.9 & 5.08 & 53.5 \\
        & & rDPO & 6.37 & 69.4  & 7.06 & 75.0 & 6.07 & 63.3 & 5.73 & 70.6 \\
        & & IPO & 6.37 & 67.6  & 7.29 & 80.0 & 6.05 & \textbf{68.1} & 5.56 & 69.0 \\
        & & \alg-DPO & \textbf{6.49} & \textbf{71.2} & \textbf{7.38} & \textbf{81.3} & \textbf{6.24} & 66.1 & \textbf{5.77} & \textbf{71.1} \\
        \bottomrule[0.1em]
    \end{tabular}
    }
    \label{tab:noisy_0.2}
\end{table}

\begin{table}[t]
    \centering
    \caption{Comparison of performance where TinyLlama-1.1B~\citep{zhang2024tinyllama} is fine-tuned on UltraChat-200k~\citep{ding-etal-2023-enhancing} and preference pairs are sampled from UltraFeedback dataset~\citep{tunstall2023zephyr} with synthetically injected noisy with probability of 0.4.}
    \resizebox{0.95\columnwidth}{!}{
    \begin{tabular}{c|c|c|cc|cc|cc|cc}
        \toprule[0.1em]
        \multirow{2}{*}{\textbf{Method}} & \multirow{2}{*}{\textbf{Size}} & \multirow{2}{*}{\textbf{Technique}} & \multicolumn{2}{c|}{\textbf{Alpaca Eval}} & \multicolumn{2}{c|}{\textbf{Vicuna Eval}} & \multicolumn{2}{c|}{\textbf{Evol-Instruct}} & \multicolumn{2}{c}{\textbf{UltraFeed}} \\
         & & & Single & SFT & Single & SFT & Single & SFT & Single & SFT \\

        \midrule[0.1em]
        \multirow{7}{*}{TinyLlama} & \multirow{7}{*}{1.1B} & SFT & 5.41 & - & 6.05 & - & 5.48 & - & 4.98 & - \\
        \cmidrule{3-11}
        & & DPO & 5.52 & 48.6  & 6.29 & 50.6 & 5.58 & 48.4 & 4.96 & 53.6 \\
        & & Iterative DPO & 5.45 & 47.8  & 6.33 & 56.3 & 5.62 & 55.7 & 5.20 & 56.2 \\
        & & cDPO & 5.42 & 44.9  & 6.34 & 46.3 & 5.66 & 47.7 & 5.06 & 52.6 \\
        & & rDPO & 5.67 & 52.4  & 6.76 & 68.1 & 5.92 & 53.0 & 5.25 & 61.0 \\
        & & IPO & 5.95 & 57.9  & 7.01 & \textbf{78.8} & 5.90 & 59.6 & 5.44 & 62.2 \\
        & & \alg-DPO & \textbf{6.28} & \textbf{66.1}  & \textbf{7.21} & 73.8 & \textbf{5.93} & \textbf{62.6} & \textbf{5.64} & \textbf{68.3} \\
        
        \bottomrule[0.1em]
    \end{tabular}
    }
    \label{tab:noisy}
\end{table}

We provide the detailed numerical values for Fig.~\ref{fig:noisy} in Tab.~\ref{tab:noisy_0.2} and Tab.~\ref{tab:noisy}. Our proposed method demonstrated its effectiveness on both 20\% and 40\% noisy preferences. Specifically, for 40\% noisy preference, \alg achieved higher performance on all evaluation datasets (except for Vicuna Eval on pairwise comparison with SFT) by a large margin.

\subsection{Additional Results on SimPO}\label{app:simpo}
\begin{table}[t]
    \centering
    \caption{Comparison of performance where TinyLlama-1.1B is fine-tuned on UltraChat-200k and preference pairs are sampled from UltraFeedback dataset with SimPO~\citep{meng2024simpo}. The best performance is highlighted \textbf{bold}.}
    \vspace{-5pt}
    \resizebox{0.92\columnwidth}{!}{
    \begin{tabular}{c|c|c|cccc|cc|cc|cc}
        \toprule[0.1em]
        \multirow{2}{*}{\textbf{Method}} & \multirow{2}{*}{\textbf{Size}} & \multirow{2}{*}{\textbf{Technique}} & \multicolumn{4}{c|}{\textbf{Alpaca Eval}} & \multicolumn{2}{c|}{\textbf{Vicuna Eval}} & \multicolumn{2}{c|}{\textbf{Evol-Instruct}} & \multicolumn{2}{c}{\textbf{UltraFeed}} \\
         & & & Single & SFT & G3.5 & G4 & Single & SFT & Single & SFT & Single & SFT \\

        \midrule[0.1em]
        \multirow{4}{*}{TinyLlama} & \multirow{4}{*}{1.1B} & SFT & 5.41 & - & 28.9 & 1.7 & 6.05 & - & 5.48 & - & 4.98 & - \\
        \cmidrule{3-13}
        & & SimPO & 6.00 & 59.4 & 31.1 & 2.9 & 6.92 & 67.2 & 5.95 & 57.3 & 5.52 & 64.6 \\
        & & Iterative SimPO & 6.11 & 62.3 & 35.4 & 3.7 & 7.14 & 73.8 & 6.02 & 65.7 & 5.53 & 66.8 \\
        & & \texttt{\textbf{SeRA}}-SimPO & \textbf{6.51} & \textbf{73.9} & \textbf{48.4} & \textbf{4.6} & \textbf{7.14} & \textbf{76.5} & \textbf{6.06} & \textbf{68.1} & \textbf{5.86} & \textbf{71.6} \\
        \bottomrule[0.1em]
    \end{tabular}
    }
    \label{tab:simpo}
\afterfigspace
\end{table}

Recently, \citet{meng2024simpo} suggested the using the average log probability of a sequence as the implicit reward as follows:
\begin{equation}\label{eq:rw_simpo}
    r_{\text{SimPO}}(\vx, \vy) = \frac{\beta}{|\vy|} \log \pi_{\theta}(\vx, \vy) = \frac{\beta}{|\vy|} \sum_{i=1}^{|\vy|} \log \pi_{\theta} (y_i|\vx, \vy_{<i}).
\end{equation}
This reward formulation better aligns with model generation and eliminates the need for a reference model, making it more compute and memory efficient. Additionally, they provide a new types of objective function based on their implicit reward as follows:
\begin{equation}\label{eq:loss_simpo}
    \mathcal{L}_{\text{SimPO}}(\vx, \vy_w, \vy_l) = \log \sigma \left( \frac{\beta}{|\vy_w|} \log \pi_{\theta}(\vx, \vy_w) - \frac{\beta}{|\vy_l|} \log \pi_{\theta}(\vx, \vy_l) \right). 
\end{equation}

Here, we additionally provide the effectiveness of \alg on SimPO, to showcase the versatility of our proposed method that can be widely applied to diverse range of implicit reward functions. Same as our experimental setup, we train TinyLlama-1.1B on UltraFeedback \citep{tunstall2023zephyr} dataset with objective function defined in \autoref{eq:loss_simpo}. As shown in \autoref{tab:simpo}, \alg consistently demonstrates its versatility, even for different types of implicit reward of DAAs. We believe that our proposed method can be applicable to various DAAs, including those that will be developed in the future.

\subsection{Full Results of \autoref{fig:over_optimize}}
\begin{figure}
    \centering
    \includegraphics[width=\textwidth]{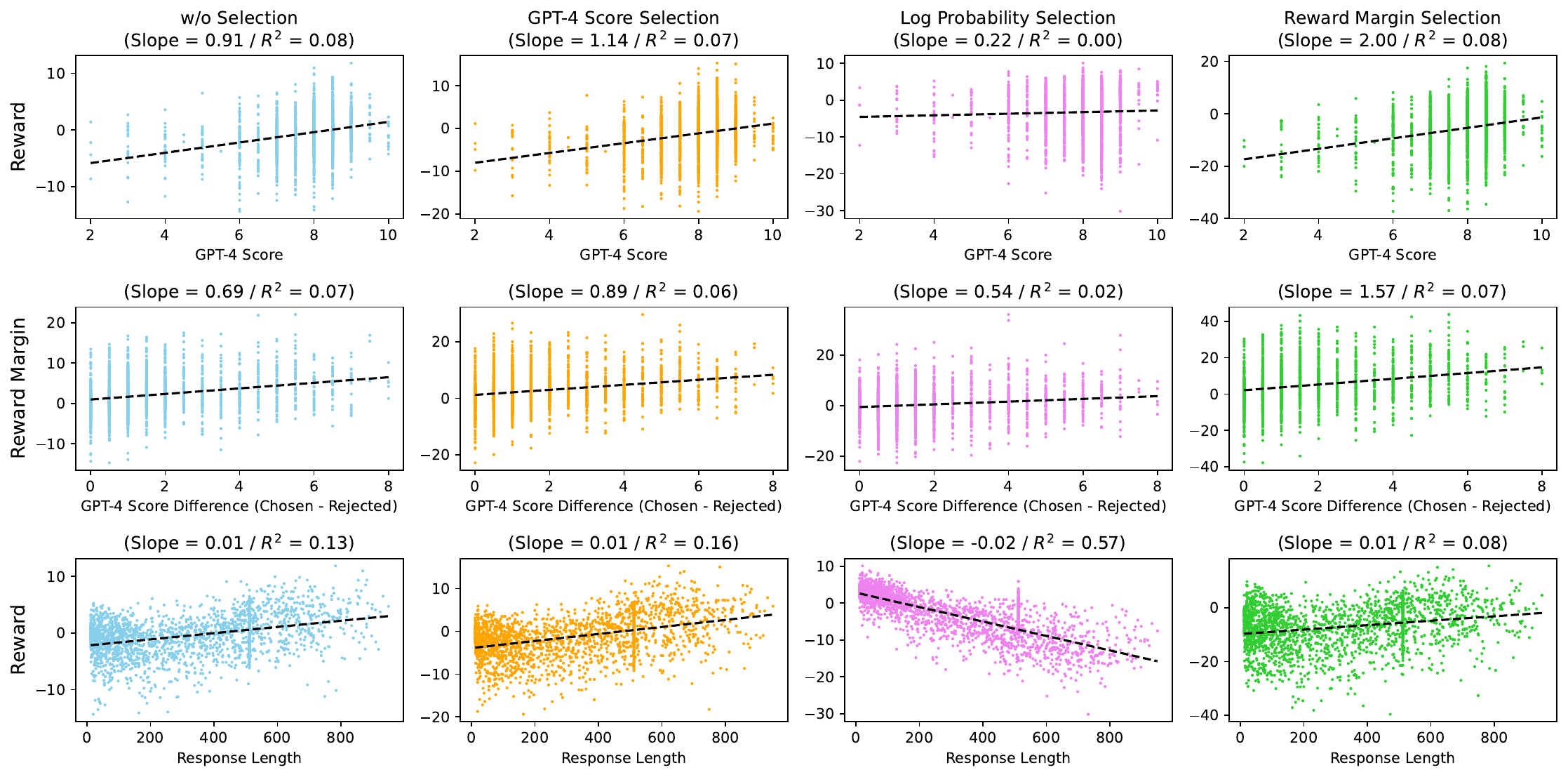}
    \caption{
        Extended Results for \autoref{fig:over_optimize}.
        \textbf{[Row 1]} Correlation between GPT-4 Score \& implicit reward~(\textit{i.e,} $r(\mathbf{x}, \mathbf{y}_{w})$) for $\mathbf{y}_w$.
        \textbf{[Row 2]} Correlation between margin $m(\mathbf{x}, \mathbf{y}_w, \mathbf{y}_l)$ using GPT-4 Score \& IRM.
        \textbf{[Row 2]} Correlation between response length (\textit{i.e.} $|\mathbf{y}_w|$) and the implicit reward for chosen responses.
        The model with IRM selection~(\textit{i.e.} \textbf{[Column 4]}) shows the highest $R^2$ score for the first and second rows, but the lowest $R^2$ score for the third row. 
        These consistent results indicate that the IRM-based selection strategy can effectively mitigate over-optimization on response length~\citep{park2024disentangling}.
    }
\label{fig:over_optimize_dense}
\afterfigspace \afterfigspace \afterfigspace
\end{figure}

We also provide the additional correlation between the IRM and the GPT-4 score gap between chosen and rejected responses that share the same prompt, as shown in \autoref{fig:over_optimize_dense}. Similar to the relationship between implicit reward and GPT-4 score, our IRM-based selection showed a higher $R^2$ score compared to other baselines (\textit{e.g.,} GPT-4 selection and log probability selection of the reference model).

\end{document}